\newtheorem{theorem}{Theorem}
\newtheorem{lemma}{Lemma}
\newtheorem{assumption}{Assumption}
\newtheorem{remark}{Remark}
\numberwithin{equation}{section}
\newcommand{\R}{\mathbb{R}}
\newcommand{\E}{\mathbb{E}}
\numberwithin{equation}{section}
\begin{document}
\title[SGEM: stochastic gradient with energy and momentum] 
{SGEM: stochastic gradient with energy and momentum}
\author{Hailiang Liu and Xuping Tian}
\address{Iowa State University, Mathematics Department, Ames, IA 50011} \email{hliu@iastate.edu, xupingt@iastate.edu}

\keywords{gradient descent, stochastic optimization, energy stability, momentum}
\subjclass{90C15, 65K10, 68Q25} 


\begin{abstract}
In this paper, we propose SGEM, Stochastic Gradient with Energy and Momentum, to solve a large class of general non-convex stochastic optimization problems, based on the AEGD method that originated in the work [AEGD: Adaptive Gradient Descent with Energy. arXiv: 2010.05109]. SGEM incorporates both energy and momentum at the same time so as to inherit their dual advantages. We show that SGEM features an unconditional energy stability property, and derive energy-dependent convergence rates in the general nonconvex stochastic setting, as well as a regret bound in the online convex setting. A lower threshold for the energy variable is also provided. Our experimental results show that SGEM converges faster than AEGD and generalizes better or at least as well as SGDM in training some deep neural networks.
\end{abstract}

\maketitle


\section{Introduction}\label{sec1}

In this paper, we propose SGEM: Stochastic Gradient with Energy and Momentum, to solve the following general non-convex stochastic optimization problem
\begin{equation}\label{opt}
\min_{\theta\in\R^d} f(\theta):=\E_\xi[f(\theta;\xi)],
\end{equation}
where $\E_\xi[\cdot]$ denotes the expectation with respect to the random variable $\xi$. We assume that $f$ is differentiable and bounded from below, i.e., $f^*=\inf_{\theta \in \R^d} f(\theta)>-c$ for some $c>0$.

Problem (\ref{opt}) arises in many statistical learning and deep learning models \cite{LBH15, Go16, BC18}. For such large scale problems, it would be too expensive to compute the full gradient $\nabla f(\theta)$. 
One approach to handle this difficulty is to use an unbiased estimator of $\nabla f(\theta)$. Denote the stochastic gradient at the $t$-th iteration as $g_t$, the iteration of Stochastic Gradient Descent (SGD) \cite{RM51} can be described as:
$$
\theta_{t+1}=\theta_t-\eta_t g_t,
$$
where $\eta_t$ is called the learning rate. Its convergence is known to be ensured if $\eta_t$ meets the sufficient condition:
\begin{equation}\label{con}
\sum_{t=1}^\infty \eta_t=\infty, \quad   \sum_{t=1}^\infty \eta_t^2<\infty.
\end{equation}
However, vanilla SGD suffers from slow convergence due to the variance of the stochastic gradient, which is one of the major bottlenecks for practical use of SGD \cite{Bo12, SW96}. Its performance is also sensitive to the learning rate, which is tricky to tune via (\ref{con}). Different techniques have been introduced to improve the convergence and robustness of SGD, such as variance reduction \cite{DB+14, LJ+17, JZ13, SW+19}, momentum acceleration \cite{Zhu18, SM+13}, and adaptive learning rate \cite{Du11, TH12, KB17}. Among these, momentum and adaptive learning rate techniques are most economic since they require slightly more computation in each iteration. However, training with adaptive algorithms such as Adam or its variants typically generalizes worse than SGD with momentum (SGDM), even when the training performance is better \cite{WR+18}. 

The most popular momentum technique, Heavy Ball (HB) \cite{P64} has been extensively studied for stochastic optimization problems \cite{LG20, JN+18, Q99}. SGDM, also called SHB, as a combination of SGD and momentum takes the following form
\begin{equation}\label{sgdm}
m_t=\mu m_{t-1}+g_t, \; \theta_{t+1}=\theta_t-\eta_t m_t,  
\end{equation}
where $m_0=0$ and $\mu \in (0, 1)$ is the momentum factor. This helps to reduce the variance in stochastic gradients thus speeds up the convergence, and has been found to be successful in practice \cite{SM+13}. 

AEGD originated in the work \cite{LT20} is a gradient-based optimization algorithm that adjusts the learning rate by a transformed gradient $v$ and an energy variable $r$. The method includes two ingredients: the base update rule:
\begin{equation}\label{aegd0}
\theta_{t+1}=\theta_t +2\eta r_{t+1}v_t, \quad r_{t+1} = \frac{r_{t}}{1+2\eta v^2_{t}},
\end{equation} 
and the stochastic evaluation of the transformed gradient $v_t$ as 
\begin{equation}\label{vt}
v_{t}=\frac{g_{t}}{2\sqrt{f( \theta_t; \xi_t)+c}}.
\end{equation}
AEGD is unconditionally energy stable with guaranteed convergence in energy regardless of the size of the base learning rate $\eta>0$ and how $v_t$ is evaluated. This explains why the method can have a rapid initial training process as well as good generalization performance \cite{LT20}. 

{In addition, the stochastic AEGD has also shown distinct improvement in performance as observed in experiments in \cite{LT20}, where the convergence analysis is mostly restricted to the deterministic setting. Questions we want to answer here are: (i) does Stochastic AEGD converge for non-convex optimization problems as in the deterministic setting? (ii) whether adding momentum can help accelerate convergence?  In this paper, we propose and analyze a unified framework by incorporating both energy and momentum so as to inherit their dual advantages. We do so by keeping the base AEGD update rule (\ref{aegd0}), but taking 
\begin{equation}\label{vtm}
\begin{aligned}
&v_{t}=\frac{m_{t}}{2(1-\beta^t)\sqrt{f( \theta_t; \xi_t)+c}}, \\
&m_t=\beta m_{t-1}+ (1-\beta)g_t,
\end{aligned}   
\end{equation}
where $\beta\in (0, 1)$.
We call this novel method SGEM. It reduces to AEGD when $\beta=0$. 
An immediate advantage is that with such $v_t$ one can significantly reduce the oscillations observed in the AEGD in stochastic cases. Regarding the theoretical results, we attempt to develop a convergence theory for SGEM, in both stochastic nonconvex setting and online convex setting.
}

We highlight the main contributions of our work as follows:
\begin{itemize}
\item We propose a novel and simple gradient-based method SGEM which integrates both energy and momentum. 
The only hyperparameter requires tuning is the base learning rate.
\item We show the unconditional energy stability of SGEM, and provide energy-dependent convergence rates in the general stochastic nonconvex setting,  and a regret bound for the online convex framework. We also obtain a lower threshold for the energy variable. Our assumptions are natural and mild.  
\item We empirically validate the good performance of SGEM on several deep learning benchmarks. Our results show that
\begin{itemize}
\item The base learning rate requires little tuning on complex deep learning tasks.
\item Overall, SGEM is able to achieve both fast convergence and good generalization performance. Specifically, SGEM converges faster than AEGD and generalizes better or at least as well as SGDM.
\end{itemize}
\end{itemize}

{\bf Related works.}  
The  essential  idea  behind AEGD is the  so  called  Invariant Energy Quadratizaton (IEQ) strategy, originally introduced for developing linear and unconditionally energy stable schemes for gradient flows in the form of partial differential equations \cite{Y16, ZWY17}. 
As for gradient-based methods, there has appeared numerous works on the analysis of convergence rates. In online convex setting, a regret bound for SGD is derived in \cite{Zin03}; the classical convergence results of SGD in stochastic nonconvex setting can be found in \cite{BC18}; For SGDM, we refer the readers to \cite{YJ19, YY18, LG20} for convergence rates on smooth nonconvex objectives. For adaptive gradient methods, most convergence analyses are restricted to online convex setting \cite{Du11, RK18, LX19}, while recent attempts, such as \cite{CL19, ZS19}, have been made to analyze the convergence in stochastic nonconvex setting. 

This paper is organized as follows. We first review AEGD in Section \ref{review}, then introduce the proposed algorithm in Section \ref{proalg}. Theoretical analysis including unconditional energy stability, convergence rates in both stochastic nonconvex setting and online convex setting are presented in Section \ref{thm}. In Section \ref{exp}, we report some experimental results on deep learning tasks.

{\bf Notation} For a vector $\theta\in\R^n$, we denote $\theta_{t,i}$ as the $i$-th element of $\theta$ at the $t$-th iteration. For vector norm, we use $\|\cdot\|$ to denote $l_2$ norm and use $\|\cdot\|_\infty$ to denote $l_\infty$ norm.
We also use $[m]$ to represent the list $\{1,...,m\}$ for any positive integer $m$. 

\section{Review of AEGD}\label{review}

Recall that for the objective function $f$, we assume that $f$ is differentiable and bounded from below, i.e., $f(\theta)>-c$ for some $c>0$. The key idea of AEGD introduced in \cite{LT20} is the use of an auxiliary energy variable $r$ such that 
\begin{equation}\label{frv}
 \nabla f(\theta) = 2rv, \quad  
v:=\nabla \sqrt{f(\theta)+c}, 
\end{equation} 
where $r$, taking as $\sqrt{f(\theta)+c}$ initially, will be updated together with $\theta$, and $v$ is dubbed as the transformed gradient. The gradient flow $\dot \theta=-\nabla f(\theta)$ is then replaced by 
$$
\dot \theta=-2rv, \quad \dot r= v\cdot \dot \theta. 
$$
A simple implicit-explicit discretization gives the following AEGD update rule:
\begin{subequations}\label{aegd-g}
\begin{align}
&v_t=\frac{\nabla f(\theta_t)}{2\sqrt{f(\theta_t)+c}}.\\
&\theta_{t+1}=\theta_t-2\eta r_{t+1}v_t, \\ &r_{t+1}-r_t=v_t \cdot (\theta_{t+1}-\theta_t).   
\end{align}    
\end{subequations}
This yields a decoupled update for $r$ as 
$r_{t+1}=r_t/(1+2\eta |v_t|^2)$, which serves to adapt the learning rate.
For large-scale problems, stochastic sampling approach is preferred. Let $f(\theta_t;\xi_t)$ be a stochastic estimator of the function value $f(\theta_t)$ at the $t$-th iteration, $g_t$ be a stochastic estimator of the gradient $\nabla f(\theta_t)$, then the stochastic version of AEGD is still (\ref{aegd-g}) but with $v_t$ replaced by 
$$
v_t=\frac{g_t}{2\sqrt{f(\theta_t;\xi_t)+c}}.
$$
Usually, $g_t$ should be required to satisfy 
$\E[g_t]=\nabla f(\theta_t)$ and $\E[\|g_t\|^2]$ bounded.  
Correspondingly, an element-wise version of AEGD for stochastic training reads as
\begin{subequations}\label{aegd}
\begin{align}
& v_{t,i} = \frac{g_{t,i}}{2\sqrt{f(\theta_t;\xi_t)+c}},\quad i\in[n], \\
& r_{t+1,i} =\frac{r_{t,i}}{1+2\eta v_{t,i}^2},\quad r_{1,i}=\sqrt{f(\theta_1;\xi_1)+c}, \\
& \theta_{t+1,i}=\theta_{t,i}-2\eta r_{t+1,i}v_{t,i}.
\end{align}
\end{subequations}
The element-wise AEGD allows for different effective learning rates for different coordinates, which has been empirically verified to be more effective than the global AEGD (\ref{aegd-g}). For further details, we refer to \cite{LT20}. We will focus only on the element-wise version of SGEM in what follows.

\section{AEGD with momentum}\label{proalg}
In this section, we present a novel algorithm to improve AEGD with added momentum in the following manner: 
\begin{subequations}\label{mv}
\begin{align}
m_{t} &= \beta m_{t-1}+(1-\beta)g_{t},\quad m_{0}=\bf{0},\\
v_{t} &= \frac{m_{t}}{2(1-\beta^t)\sqrt{f(\theta_t;\xi_t)+c}},
\end{align}
\end{subequations}
where $\beta\in(0,1)$ controls the weight for gradient at each step. With $v_t$ so defined, the update rule for $r$ and $\theta$ are kept the same as given in (\ref{aegd}b, c). The relation between the energy and the momentum in the algorithm is realized through relating $m_t$ ( as an approximation to $\nabla f$)  to $v_t$ (as an approximation of $\nabla F = \frac{\nabla f}{2\sqrt{f+c}}$), where $v_t$ is used to update the energy $r_{t+1}$. In machine learning tasks,  $f$ as a loss function is often in the form of $f(\theta)=\frac{1}{m}\sum_{i=1}^{m}l_i(\theta)$, where $l_i$,  measuring the distance between the model output and target label at the $i$-th data point, is typically bounded from below, that is, $l_i(\theta)>-c, \forall i\in[m]$, for some $c>0$. 
Hence $c$ in (\ref{mv}b) can be easily chosen in advance so that $f(\theta_t;\xi_t)$ as a random sample from $\{l_i(\theta_t)\}_{i=1}^{m}$ is bounded below by $-c$ for all $t\in[T]$. 
We summarize this in Algorithm \ref{alg} (called SGEM, for short).

A key feature of SGEM is that it incorporates momentum into AEGD without changing the overall structure of the AEGD algorithm (the update of $r$ and $\theta$ remain the same) so that it is shown (in Section \ref{thm}) to still enjoy the unconditional energy stability property as AEGD does. In addition, by using $m_t$ instead of $g_t$, the variance can be significantly reduced. In fact, as proved in \cite{LG20}, under the assumption $\E_{\xi_t}[\|g_t-\nabla f(\theta_t)\|^2]= \sigma^2_g<\infty$,  $m_t$, which can be expressed as a linear combination of the gradients at all previous steps, 
\begin{equation}\label{mmt}
m_{t}=(1-\beta)\sum_{j=1}^{t}\beta^{t-j}g_j,     
\end{equation}
enjoys a reduced ``variance" in the sense that
$$
\E_{\xi_t}\Bigg[\bigg\|m_t-(1-\beta)\sum_{j=1}^{t}\beta^{t-j}\nabla f(\theta_j)\bigg\|^2\Bigg]\leq (1-\beta)\sigma^2_g.
$$
{Hence compared with AEGD, we expect SGEM to achieve faster convergence due to the reduced variance. This is also the main advantage of SGEM over AEGD that we observed in experiments (See Figure \ref{fig:mnist}).}

\begin{figure}[ht]
\centering
\includegraphics[width=0.5\linewidth]{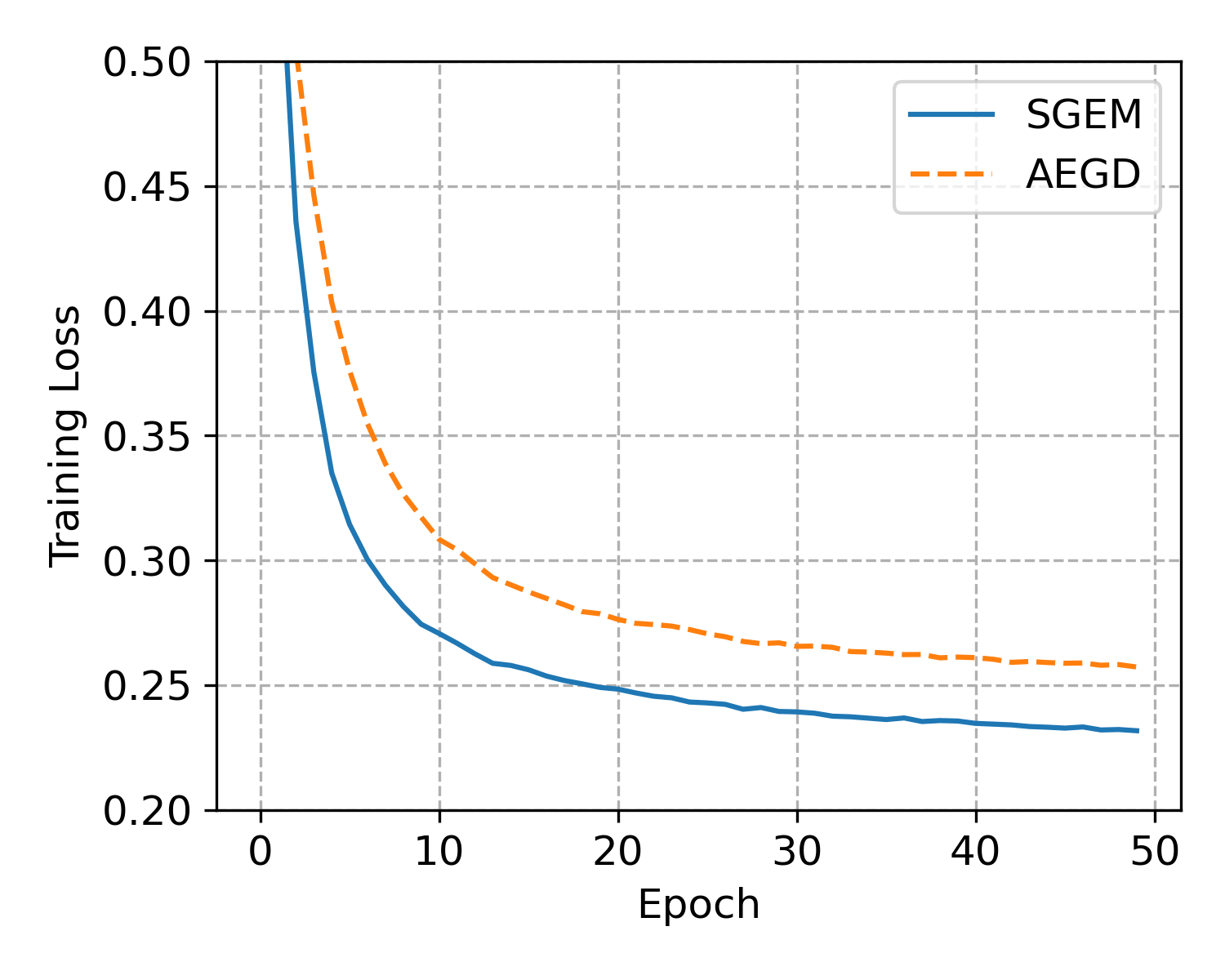}
\caption{The training loss of a Logistic regression model for the multiclass classification problem on MNIST dataset. Unlike the common setting where the batch size is set as 128 or 64, we take the batch size as 1 so that the problem of the variance of the stochastic gradient is more severe.}
\label{fig:mnist}
\end{figure}%

\begin{algorithm}
\caption{SGEM. Good default setting for parameters are $\eta=0.2$, $\beta=0.9$}
\label{alg}
\begin{algorithmic}[1] 
\State 
Require: the base learning rate $\eta$; a constant $c$ such that $f(\theta_t;\xi_t)+c>0$ for all $t\in[T]$; a momentum factor $\beta \in (0, 1)$.
\State Initialization: $\theta_1$; $m_0=\bf{0}$; $r_1=\sqrt{f(\theta_1;\xi_1)+c}\,\bf{1}$
\For{$t=1$ to $T-1$}
\State Compute gradient: $g_t=\nabla f(\theta_t;\xi_t)$
\State $m_{t}=\beta m_{t-1}+ (1-\beta) g_t$ (momentum update)
\State $v_t=  m_{t}/(2(1-\beta^t)\sqrt{f(\theta_t;\xi_t)+c})$ (transformed momentum)
\State $r_{t+1}=r_{t}/(1+2\eta v_t\odot v_t)$ (energy update)
\State $\theta_{t+1}=\theta_t- 2\eta r_{t+1}\odot v_{t}$ (state update)
\EndFor
\State {\bfseries return} ${\theta}_T$
\end{algorithmic}
\end{algorithm}

\begin{remark}
(i) In Algorithm \ref{alg}, we use $x\odot y$ to denote element-wise product, $x/y$ to denote element-wise division of two vectors $x,y\in \R^n$.\\
(ii) It is clear that $m_t$ defined in (\ref{mmt}) is not a convex combination of $g_j$, this is why there is a factor  $1-\beta^t$ in  (\ref{mv}b); such treatment is dubbed as bias correction in \cite{KB17} for Adam. \\
(iii) In most machine learning problems, we have $f(\theta)\geq 0$, for which a good default value for $c$ in Algorithm \ref{alg} is $1$. 
\end{remark}

\section{Theoretical results}\label{thm}
In this section, we present our theoretical results, including the unconditional energy stability of SGEM,  the convergence of SGEM for the general stochastic nonconvex optimization, 
a lower bound for energy $r_T$, and a regret bound in the online convex setting.

\subsection{Unconditional energy stability}

\begin{theorem}(Unconditional energy stability)
\label{thm1} SGEM in Algorithm \ref{alg}
is unconditionally energy stable in the sense that for any step size $\eta>0, i\in [n]$, 
\begin{equation}\label{srei+}
\begin{aligned}
   \E[r^2_{t+1,i}]=\E[r^2_{t,i}] &-\E[(r_{t+1,i} -r_{t,i})^2]\\
   &- \eta^{-1} \E[(\theta_{t+1,i}-\theta_{t,i})^2],
   \end{aligned}    
\end{equation}
that is $\E[r_{t,i}]$ is strictly decreasing and convergent with $\E[r_{t,i}] \to \E[r_i^*]$ as $t\to \infty$, and also
\begin{equation}\label{srev1+}
\begin{aligned}
   &\lim_{t\to \infty}\E [(\theta_{t+1,i}-\theta_{t,i})^2]=0,\\ &\sum_{t=1}^\infty\E[(\theta_{t+1,i}-\theta_{t,i})^2] \leq \eta (f(\theta_1)+c).
\end{aligned}    
\end{equation}

\end{theorem}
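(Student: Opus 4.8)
The plan is to establish a deterministic, sample-path-wise energy identity for each coordinate and to pass to expectations only at the very end. The crucial point—and the reason adding momentum does not spoil stability—is that this identity is purely algebraic in the $r$- and $\theta$-updates (\ref{aegd}b,c) and is completely agnostic to how $v_{t,i}$ is defined; the momentum recursion (\ref{mv}) never enters. First I would record the two immediate consequences of those updates,
\[
r_{t+1,i}-r_{t,i}=-2\eta\, r_{t+1,i} v_{t,i}^2, \qquad \theta_{t+1,i}-\theta_{t,i}=-2\eta\, r_{t+1,i} v_{t,i},
\]
the first obtained by clearing the denominator in the energy update. Together these give the discrete chain rule $r_{t+1,i}-r_{t,i}=v_{t,i}(\theta_{t+1,i}-\theta_{t,i})$, the exact discrete analogue of $\dot r = v\cdot\dot\theta$.

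Next I would prove the pointwise identity
\[
r_{t,i}^2-r_{t+1,i}^2=(r_{t+1,i}-r_{t,i})^2+\eta^{-1}(\theta_{t+1,i}-\theta_{t,i})^2 .
\]
Writing $r_{t,i}^2-r_{t+1,i}^2=-(r_{t+1,i}-r_{t,i})(r_{t+1,i}+r_{t,i})$ and substituting $r_{t,i}=r_{t+1,i}-(r_{t+1,i}-r_{t,i})$ reduces the claim to $-2 r_{t+1,i}(r_{t+1,i}-r_{t,i})=\eta^{-1}(\theta_{t+1,i}-\theta_{t,i})^2$, and both sides equal $4\eta\, r_{t+1,i}^2 v_{t,i}^2$ by the two relations above. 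Taking expectations yields (\ref{srei+}) directly, with no regularity beyond measurability needed since the identity holds before any averaging.

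Then I would harvest the three consequences. Since $r_{1,i}=\sqrt{f(\theta_1;\xi_1)+c}>0$ and each step divides a positive number by $1+2\eta v_{t,i}^2\ge 1$, we have $0<r_{t+1,i}\le r_{t,i}$ pathwise; hence $r_{t,i}$ decreases to some $r_i^*\ge 0$ almost surely, and by monotone convergence $\E[r_{t,i}]$ is monotonically decreasing with $\E[r_{t,i}]\to\E[r_i^*]$. Summing (\ref{srei+}) over $t=1,\dots,T$ telescopes the left-hand side to $\E[r_{1,i}^2]-\E[r_{T+1,i}^2]$; dropping the nonnegative term $\E[r_{T+1,i}^2]$ and using $\E[r_{1,i}^2]=\E[f(\theta_1;\xi_1)]+c=f(\theta_1)+c$ gives
\[
\eta^{-1}\sum_{t=1}^{T}\E[(\theta_{t+1,i}-\theta_{t,i})^2]\le f(\theta_1)+c ,
\]
which is the summability bound in (\ref{srev1+}) upon letting $T\to\infty$. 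Convergence of the series forces its terms to vanish, giving $\lim_{t\to\infty}\E[(\theta_{t+1,i}-\theta_{t,i})^2]=0$.

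The only genuine subtlety is the first step: one should resist expanding everything in terms of $v_{t,i}$ and instead keep the identity at the level of the increments $r_{t+1,i}-r_{t,i}$ and $\theta_{t+1,i}-\theta_{t,i}$, so that the cancellation is transparent and the momentum-based definition (\ref{mv}) of $v_{t,i}$ plays no role. Everything after that is monotonicity, telescoping, and an application of monotone convergence, all routine once the pointwise identity is in hand.
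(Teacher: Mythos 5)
Your proposal is correct and takes essentially the same route as the paper: you establish the same pointwise algebraic identity $(\theta_{t+1,i}-\theta_{t,i})^2=\eta\bigl((r^2_{t,i}-r^2_{t+1,i})-(r_{t,i}-r_{t+1,i})^2\bigr)$ from the updates (\ref{aegd}b,c) alone—correctly observing, as the paper does, that the momentum definition of $v_t$ plays no role—and then take expectations and telescope. Your write-up merely fills in details the paper leaves implicit (the telescoping sum, $\E[r_{1,i}^2]=f(\theta_1)+c$ via unbiasedness, and the pathwise monotone convergence of $r_{t,i}$), which is fine.
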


\begin{remark}\label{rem4.1}
(i) The unconditional energy stability only depends on (\ref{aegd}b, c), irrespective of the choice for $v_t$. This property essentially means that the energy variable $r_t$, which serves to approximate $\sqrt{f(\theta_t)+c}$, is strictly decreasing for any $\eta>0$.
(ii) (\ref{srev1+}) indicates that the sequence $\|\theta_{t+1}-\theta_t\|$ converges to zero at a rate of at least $1/\sqrt{t}$. We note that this does not guarantee the convergence of $\{\theta_t\}$ unless additional information on the geometry of $f$ is available.
\end{remark}
\begin{proof}From (\ref{aegd}b, c) we have 
\begin{align*}
(\theta_{t+1,i}-\theta_{t,i})^2
&=4\eta^2 r^2_{t+1,i}v^2_{t,i}\quad\text{(By \ref{aegd}c)}\\
&=(2\eta r_{t+1,i})(r_{t,i}-r_{t+1,i})\quad\text{(By \ref{aegd}b)}\\
&=\eta((r^2_{t,i}-r^2_{t+1,i})-(r_{t,i}-r_{t+1,i})^2).
\end{align*}
This upon taking expectation ensures the asserted properties. Such proof with no use of the special form of $v_t$, is the same as that for AEGD (see \cite{LT20}).  
\end{proof}

\subsection{Convergence analysis}

Below, we state the necessary assumptions that are commonly used for analyzing the convergence of a stochastic algorithm for nonconvex problems, and notations that will be used in our analysis.
\begin{assumption}\label{asp}
\begin{enumerate}\setlength{\itemindent}{-1em}
\itemsep-0.3em 
\item ({\bf Smoothness}) The objective function in (\ref{opt}) is $L$-smooth: for any $x,y\in\R^n$, 
$$
f(y)\leq f(x)+\nabla f(x)^\top(y-x)+\frac{L}{2}\|y-x\|^2.
$$
\item ({\bf Independent samples}) The random samples $\{\xi_t\}_{t=1}^\infty$ are independent.
\item ({\bf Unbiasedness}) The estimator of the gradient and function value are unbiased: 
$$
\E_{\xi_t}[g_t]=\nabla f(\theta_t),\quad \E_{\xi_t}[f(\theta_t;\xi_t)]= f(\theta_t).
$$
\end{enumerate}
\end{assumption}
Denote the variance of the stochastic gradient and function value by $\sigma_g$ and $\sigma_f$, respectively:
$$
\E_{\xi_t}[\|g_t-\nabla f(\theta_t)\|^2]= \sigma^2_g,\quad \E_{\xi_t}[\lvert f(\theta_t;\xi_t)- f(\theta_t)\rvert^2]= \sigma^2_f.
$$
Before presenting our energy-dependent convergence rates, we first state a result on the lower bound of $r_T$.  
\begin{theorem}[Lower bound of $r_T$]
\label{thm4}
Under Assumption \ref{asp} and assume that the stochastic gradient and function value are bounded such that $\|g_t\|_\infty\leq G_\infty$ and $0<a\leq f(\theta_t;\xi_t)+c\leq B$, in the absence of noise, 
we have 
\begin{equation}\label{minr+}
\min_i r_{T,i}\geq \max\{F(\theta^*)-\eta D_1-\beta D_2,0\},   
\end{equation}
where $L_F$ is given in (\ref{LF}) and
\begin{align*}
&D_1 = \frac{L_F n F^2(\theta_1)}{2}, \quad D_2 =\frac{\sqrt{B}nF(\theta_1)}{(1-\beta)\sqrt{a}}.
\end{align*}
This implies 
\begin{equation}\label{aa}
\min_i r_{T,i}>\min_i r^*_i>0 \quad\text{if}\quad \eta D_1 +\beta D_2 <F(\theta^*).
\end{equation}
\end{theorem}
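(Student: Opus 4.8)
The plan is to work in the deterministic regime (the ``absence of noise''), where Assumption \ref{asp} forces $g_t=\nabla f(\theta_t)$ and $f(\theta_t;\xi_t)=f(\theta_t)$, so every expectation disappears. Write $F_t:=\sqrt{f(\theta_t)+c}$ and $u_t:=\nabla F(\theta_t)=g_t/(2F_t)$, so that $r_{1,i}=F(\theta_1)$, and by $0<a\le f+c\le B$ we have $\sqrt a\le F_t\le\sqrt B$ with $F$ being $L_F$-smooth ($L_F$ as in (\ref{LF}), which is precisely where $a$ and $G_\infty$ enter). Rewriting the energy update (\ref{aegd}b) as $r_{t,i}-r_{t+1,i}=2\eta r_{t+1,i}v_{t,i}^2$ and summing in $t$ telescopes to
\begin{equation*}
r_{T,i}=F(\theta_1)-2\eta\sum_{t=1}^{T-1}r_{t+1,i}v_{t,i}^2.
\end{equation*}
Since each summand is nonnegative, $\min_i r_{T,i}\ge F(\theta_1)-S$ with $S:=2\eta\sum_{t,i}r_{t+1,i}v_{t,i}^2$, and it suffices to upper-bound the single scalar $S$.

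First I would bound $S$ via a descent estimate on $F$. Inserting the state update $\theta_{t+1}-\theta_t=-2\eta\,r_{t+1}\odot v_t$ into $L_F$-smoothness gives $F(\theta_{t+1})-F(\theta_t)\le-2\eta\sum_i r_{t+1,i}u_{t,i}v_{t,i}+\tfrac{L_F}{2}\|\theta_{t+1}-\theta_t\|^2$. Summing over $t$, telescoping $F$, using $F(\theta_T)\ge F(\theta^*)$, and invoking the energy-stability bound $\sum_t\|\theta_{t+1}-\theta_t\|^2\le n\eta F^2(\theta_1)$ from \thmref{thm1}, I get
\begin{equation*}
2\eta\sum_{t,i}r_{t+1,i}u_{t,i}v_{t,i}\le F(\theta_1)-F(\theta^*)+\eta D_1,\qquad D_1=\tfrac{L_F nF^2(\theta_1)}{2}.
\end{equation*}
Splitting $u_{t,i}v_{t,i}=v_{t,i}^2+v_{t,i}(u_{t,i}-v_{t,i})$ converts the left side into $S+E$, where $E:=2\eta\sum_{t,i}r_{t+1,i}v_{t,i}(u_{t,i}-v_{t,i})$ is the ``momentum error''. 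Combining the two displays yields $\min_i r_{T,i}\ge F(\theta_1)-S\ge F(\theta^*)-\eta D_1+E$, so the theorem reduces to showing $E\ge-\beta D_2$.

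The hard part is bounding $E$ by a $T$-\emph{independent} constant: naive term-by-term or Cauchy--Schwarz estimates fail because $u_t$-weighted sums, unlike the $v_t$-weighted ones, do not telescope against the energy. The fix is to re-express the error purely through $v$. Unfolding the recursion (\ref{mv}a) gives the element-wise identity $v_t=\gamma_t\rho_t v_{t-1}+\alpha_t u_t$ with $\alpha_t=\tfrac{1-\beta}{1-\beta^t}$, $\gamma_t=\tfrac{\beta(1-\beta^{t-1})}{1-\beta^t}$ (hence $\alpha_t+\gamma_t=1$) and $\rho_t=F_{t-1}/F_t$; solving for $u_t$ gives
\begin{equation*}
u_t-v_t=\frac{\gamma_t}{\alpha_t}\,(v_t-\rho_t v_{t-1}),\qquad \frac{\gamma_t}{\alpha_t}=\frac{\beta(1-\beta^{t-1})}{1-\beta}\le\frac{\beta}{1-\beta}.
\end{equation*}
Substituting this into $E$ and using $2\eta r_{t+1,i}v_{t,i}^2=r_{t,i}-r_{t+1,i}$, the bound $2\eta r_{t+1,i}|v_{t,i}v_{t-1,i}|\le\eta r_{t+1,i}(v_{t,i}^2+v_{t-1,i}^2)$, and $r_{t+1,i}\le r_{t,i}$, makes every sum telescope to at most $F(\theta_1)$ per coordinate; the prefactor $\tfrac{\gamma_t}{\alpha_t}\le\tfrac{\beta}{1-\beta}$ supplies the $\beta/(1-\beta)$, $\rho_t\le\sqrt{B/a}$ supplies the $\sqrt B/\sqrt a$, and summing over the $n$ coordinates supplies $n$. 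This gives $|E|\le\beta D_2$ with $D_2=\tfrac{\sqrt B\,nF(\theta_1)}{(1-\beta)\sqrt a}$ (up to how the cross terms $v_tv_{t-1}$ are grouped), hence $\min_i r_{T,i}\ge F(\theta^*)-\eta D_1-\beta D_2$. As $r_{T,i}>0$ always, the $\max\{\cdot,0\}$ in (\ref{minr+}) is automatic; and since the bound is $T$-independent while $r_{t,i}\downarrow r_i^*$ strictly by \thmref{thm1}, letting $T\to\infty$ gives $\min_i r_i^*\ge F(\theta^*)-\eta D_1-\beta D_2$, so $\min_i r_{T,i}>\min_i r_i^*>0$ whenever $\eta D_1+\beta D_2<F(\theta^*)$, which is (\ref{aa}). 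The one genuine obstacle is the identity-plus-telescoping control of $E$; the rest is bookkeeping.
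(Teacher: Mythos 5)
Your proposal is correct, arrives at exactly the paper's constants $D_1$ and $D_2$, and shares the paper's overall skeleton: an $L_F$-smoothness descent inequality for $F$, telescoping of the energy identity $r_{t,i}-r_{t+1,i}=2\eta r_{t+1,i}v_{t,i}^2$, the bound $\sum_t\|\theta_{t+1}-\theta_t\|^2\le \eta n F^2(\theta_1)$ from Theorem \ref{thm1} to absorb the quadratic term into $\eta D_1$, and a $T$-independent bound on the momentum-error term giving $\beta D_2$. Where you genuinely differ is in how that last term is controlled. The paper splits $\nabla F(\theta_t)^\top(\theta_{t+1}-\theta_t)$ into three pieces (a noise term $S_3$, a momentum error $S_2$ written via $m_{t-1}$, and a $v_t$-term $S_1$ that telescopes in $r$), pre-establishes the energy-weighted sums $\sum_t m_{t-1}^\top r_{t+1}m_{t-1}\le 2BnF(\theta_1)/\eta$ and $\sum_t v_t^\top r_{t+1}v_t\le nF(\theta_1)/(2\eta)$ in Lemma \ref{lemrv}, and applies one global Cauchy--Schwarz across the double sum. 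You instead derive the exact elementwise recursion $v_t=\gamma_t\rho_t v_{t-1}+\alpha_t u_t$, hence $u_t-v_t=\frac{\gamma_t}{\alpha_t}(v_t-\rho_t v_{t-1})$ with $\frac{\gamma_t}{\alpha_t}\le\frac{\beta}{1-\beta}$ and $\rho_t\le\sqrt{B/a}$, and then use Young's inequality $2|v_{t,i}v_{t-1,i}|\le v_{t,i}^2+v_{t-1,i}^2$ together with $r_{t+1,i}\le r_{t,i}$ so that everything telescopes per coordinate against $r_{1,i}=F(\theta_1)$; this bypasses Lemma \ref{lemrv} and Cauchy--Schwarz entirely (your identity is the paper's relation $m_{t-1}=2(1-\beta^{t-1})\tilde F_{t-1}v_{t-1}$ in disguise), and your observation that only the one-sided bound $E\ge-\beta D_2$ is needed, because the $v_{t,i}^2$ part of $E$ is nonnegative, makes the estimate clean, with the recursion correctly degenerating at $t=1$ via $\gamma_1=0$. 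The trade-off: your argument is more elementary and coordinatewise, but by working in the noiseless regime from the outset (legitimate, since the theorem is stated ``in the absence of noise'') you lose the paper's more general intermediate estimate $\min_i\E[r_{T,i}]\ge F(\theta^*)-\eta D_1-\beta D_2-\sigma D_3$, which quantifies the additional degradation due to stochastic noise before specializing to $\sigma=0$.
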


\begin{remark}
(i) (\ref{aa}) is only a sufficient condition, not intended as a guide for choosing $\eta$. We observe from our experimental results that the upper bound for $\eta$ to guarantee the positiveness of $r^*_i$ can be much larger (See Figure \ref{fig:minr}).\\
(ii) 
In Theorem \ref{thm4}, we measure how far $r^*$ can deviate from $F(\theta^*)$ in the worst scenario using simple error split, where $\eta D_1$ is the error brought by the step size $\eta$, and $\beta D_2$ is due to the use of momentum. 
\end{remark}

We only present a sketch of proofs for Theorem \ref{thm4} and \ref{thm2} here, using notation $\tilde F_t=\sqrt{f(\theta_t;\xi_t)+c}$, $\eta_t=\eta/\tilde F_t$ and viewing $r_{t+1}$ as a $n\times n$ diagonal matrix that is made up of $[r_{t+1,1},...,r_{t+1,i},...,r_{t+1,n}]$. Detailed proofs, including two crucial lemmas and the full proof for Theorem \ref{thm3}, are deferred to the appendix. 

\begin{proof}
Using the $L_F$-smoothness of $F(\theta)$, we have
\begin{equation}\label{LFe}
F(\theta_{t+1}) -  F(\theta_t)
\leq\nabla F(\theta_t)^\top (\theta_{t+1}-\theta_t) +\frac{L_F}{2}\|\theta_{t+1}-\theta_t\|^2,
\end{equation}
in which the key term $\nabla F(\theta_t)^\top (\theta_{t+1}-\theta_t)$ can be decomposed into three terms:
\begin{gather*}
(\nabla F(\theta_t)-\frac{g_t}{2\tilde F_t})^\top(\theta_{t+1}-\theta_t),\\ (\frac{g_t}{2\tilde F_t}-\frac{v_t}{1-\beta})^\top(\theta_{t+1}-\theta_t),\quad (\frac{v_t}{1-\beta})^\top(\theta_{t+1}-\theta_t).
\end{gather*}
The first two terms are bounded by using the bounded variance assumption and (\ref{srev1+}), respectively. We convert the last term, using (recall \ref{aegd-g}c)
$$
r_{t+1,i}-r_{t,i}=v_{t,i}(\theta_{t+1,i}-\theta_{t,i}), 
$$
into expressions in terms of $r_{t+1,i}-r_{t,i}$, which upon summation is bounded by $r_{T,i}$. The last term in (\ref{LFe}) is bounded again by using (\ref{srev1+}).
\end{proof}

We proceed to discuss the convergence results. First note that the $L$-smoothness of $f(\theta)$ implies the $L_F$-smoothness of $F(\theta)=\sqrt{f(\theta)+c}$ with
\begin{equation}\label{LF}
L_F=
\frac{1}{2F(\theta^*)} \left( L+ \frac{G^2_\infty}{2F^2(\theta^*)}\right).     
\end{equation}
This will be used in the following result and its proof. 

\begin{theorem}\label{thm2} Let $\{\theta_t\}$ be the solution sequence generated by Algorithm \ref{alg} with a fixed $\eta>0$. Under the same assumptions as in Theorem \ref{thm4}, we have $\sigma_g\leq G_\infty$ and for all $T\geq 1$ and $\epsilon \in(0, 1)$,
\begin{align*}
&\quad\;\E\Bigg[\min_{1\leq t \leq T}\|\nabla f(\theta_t)\|^{2-2\epsilon} \Bigg]\\
& \leq \left( \frac{C_1+C_2n+C_3\sigma_g \sqrt{ nT}}{\eta T } \right)^{1-\epsilon}  \cdot 
\E\Big[(\min_i r_{T,i})^{1-1/\epsilon}\Big]^{\epsilon}, 
\end{align*}
where $C_1, C_2, C_3$ are constants depending on $\beta, \eta, L, G_\infty, a, B, n$ and  $f(\theta_1)+c$. 
\end{theorem}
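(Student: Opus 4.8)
The plan is to run the $L_F$-smoothness descent inequality (\ref{LFe}) for $F=\sqrt{f+c}$ (with $L_F$ as in (\ref{LF})) and reorganize the three-term decomposition of $\nabla F(\theta_t)^\top(\theta_{t+1}-\theta_t)$ already set up in the sketch of Theorem \ref{thm4}, but now so as to expose $\|\nabla f(\theta_t)\|^2$ rather than a lower bound on the energy. Writing the update as $\theta_{t+1,i}-\theta_{t,i}=-2\eta r_{t+1,i}v_{t,i}$ and splitting $v_{t,i}=\partial_i F(\theta_t)+(v_{t,i}-\partial_i F(\theta_t))$, the ``aligned'' part of the cross term is $-2\eta\sum_i r_{t+1,i}(\partial_i F(\theta_t))^2$. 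Using the monotonicity $r_{t+1,i}\ge \min_i r_{T,i}$ from Theorem \ref{thm1} and $F^2(\theta_t)=f(\theta_t)+c\le B$, this aligned part is bounded in magnitude below by $\frac{\eta}{2B}(\min_i r_{T,i})\|\nabla f(\theta_t)\|^2$, which is the term carrying the gradient norm. The leftover, coming from $v_{t,i}-\partial_i F(\theta_t)$, is exactly the noise term $I_t$ and the momentum-deviation term $II_t$ of the decomposition, to be treated as errors; the preliminary estimate $\sigma_g\le G_\infty$ is immediate from the boundedness of $g_t$ and the definition of the variance.

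I would then sum the rearranged descent inequality over $t=1,\dots,T$. The function differences telescope to $F(\theta_1)-F(\theta_{T+1})\le F(\theta_1)$ (using $F\ge 0$); the curvature term $\frac{L_F}{2}\sum_t\|\theta_{t+1}-\theta_t\|^2$ is controlled in expectation by the path-length estimate $\sum_t\E\|\theta_{t+1}-\theta_t\|^2\le n\eta F^2(\theta_1)$ from (\ref{srev1+}); and the error sums $\sum_t I_t,\ \sum_t II_t$ are bounded by Cauchy--Schwarz in $t$ followed by the same path-length estimate, e.g. $\sum_t|I_t|\le(\sum_t\|\nabla F(\theta_t)-\tfrac{g_t}{2\tilde F_t}\|^2)^{1/2}(\sum_t\|\theta_{t+1}-\theta_t\|^2)^{1/2}$, whose first factor is $O(\sigma_g\sqrt{T})$ after using $\tilde F_t\ge\sqrt a$ and the variance bound, and whose second factor is $O(\sqrt{n\eta}\,F(\theta_1))$; this is the source of the $C_3\sigma_g\sqrt{nT}$ contribution. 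Lower-bounding the summed left side by $\frac{\eta}{2B}(\min_i r_{T,i})\sum_t\|\nabla f(\theta_t)\|^2$, using $T\min_{1\le t\le T}\|\nabla f(\theta_t)\|^2\le\sum_t\|\nabla f(\theta_t)\|^2$, and taking expectations, I obtain the master inequality $\E[(\min_i r_{T,i})\min_t\|\nabla f(\theta_t)\|^2]\le\frac{C_1+C_2 n+C_3\sigma_g\sqrt{nT}}{\eta T}$, with $C_1,C_2,C_3$ collecting the $\eta,\beta,L,G_\infty,a,B$-dependent constants.

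The final, cleanest step converts this product estimate into the stated bound. Writing $\min_t\|\nabla f(\theta_t)\|^{2-2\epsilon}=((\min_i r_{T,i})\min_t\|\nabla f(\theta_t)\|^2)^{1-\epsilon}\,(\min_i r_{T,i})^{-(1-\epsilon)}$ and applying Hölder's inequality with the conjugate exponents $\tfrac{1}{1-\epsilon}$ and $\tfrac{1}{\epsilon}$, the first factor becomes the $(1-\epsilon)$-power of the master bound, while the second becomes $\E[(\min_i r_{T,i})^{1-1/\epsilon}]^{\epsilon}$, since $-(1-\epsilon)/\epsilon=1-1/\epsilon$. This reproduces the asserted inequality exactly.

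I expect the main obstacle to be the control of the two deviation terms while preserving the $\min_i r_{T,i}$ factor needed for the concluding Hölder step. The noise term $I_t$ cannot be handled by naively inserting $\E_{\xi_t}[g_t]=\nabla f(\theta_t)$, because $\theta_{t+1}-\theta_t$ itself depends on $g_t$ and on $\tilde F_t$; one is forced into the Cauchy--Schwarz route above together with the bounded-variance assumption and $\tilde F_t\ge\sqrt a$. The momentum term $II_t$ is harder: $\hat m_t=m_t/(1-\beta^t)$ is a convex combination of stochastic gradients evaluated at \emph{different} iterates, so relating it to $\nabla f(\theta_t)$ requires the variance-reduction estimate for $m_t$ (as in \cite{LG20}) combined with $L$-smoothness and the path-length bound to turn each $\nabla f(\theta_j)-\nabla f(\theta_t)$ into $L\|\theta_j-\theta_t\|$, and this is precisely where the $\beta$-dependence of the constants is generated. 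Ensuring all these bounds retain the multiplicative $\min_i r_{T,i}$ so that the Hölder identity applies verbatim is the delicate bookkeeping at the heart of the argument.
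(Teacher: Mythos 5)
Your endgame is exactly the paper's (the master inequality followed by H\"older with exponents $\tfrac{1}{1-\epsilon},\tfrac{1}{\epsilon}$ and $Y=\min_i r_{T,i}$ is verbatim the paper's final step, and $\sigma_g\le G_\infty$ is Lemma \ref{lempre}(iv)), but your route to the master inequality has a genuine gap: you run smoothness on $F=\sqrt{f+c}$, and your claim that the noise factor $\big(\sum_{t=1}^T\|\nabla F(\theta_t)-\tfrac{g_t}{2\tilde F_t}\|^2\big)^{1/2}$ is $O(\sigma_g\sqrt T)$ is false. Even with exact gradients ($\sigma_g=0$), one has $\nabla F(\theta_t)-\tfrac{g_t}{2\tilde F_t}=\tfrac{\nabla f(\theta_t)}{2}\big(\tfrac{1}{F(\theta_t)}-\tfrac{1}{\tilde F_t}\big)\neq 0$ because $\tilde F_t=\sqrt{f(\theta_t;\xi_t)+c}$ fluctuates around $F(\theta_t)$; the correct estimate is Lemma \ref{lempre}(vi), $\E\big[\|\nabla F(\theta_t)-\tfrac{g_t}{2\tilde F_t}\|^2\big]\le \tfrac{G_\infty^2}{8a^3}\sigma_f^2+\tfrac{1}{2a}\sigma_g^2$. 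Your master inequality would therefore carry $C_3(\sigma_g+\sigma_f)\sqrt{nT}$, a strictly weaker statement than Theorem \ref{thm2}, whose $\sqrt{nT}$ term carries exactly $\sigma_g$ with constants independent of $\sigma_f$ (in particular, your bound loses the $1/T$ rate in the regime $\sigma_g=0$, $\sigma_f>0$). This contamination is intrinsic to the $F$-smoothness route: the paper uses that route only for Theorem \ref{thm4}, and the $\sqrt{T}$-growing $\sigma$-terms it produces are precisely why that theorem is stated ``in the absence of noise.'' For Theorem \ref{thm2} the paper instead applies $L$-smoothness to $f$ itself, writes $\theta_{t+1}-\theta_t=-\tfrac{\eta_t}{1-\beta^t}r_{t+1}m_t$ with $\eta_t=\eta/\tilde F_t$, shifts the good term to the product $\eta_{t-1}r_t g_t$, which is measurable with respect to $\xi_1,\dots,\xi_{t-1}$, so that conditional expectation yields $\nabla f(\theta_t)^\top\eta_{t-1}r_t\nabla f(\theta_t)$, and controls the shift $(\eta_{t-1}r_t-\eta_t r_{t+1})g_t$ by telescoping $\|r_t\|_{1,1}$ together with the dissipation sums of Lemma \ref{lemrv}; the only stochastic deviation ever measured is $\nabla f(\theta_t)-g_t$, while $\tilde F_t$ enters solely through the deterministic bounds $\sqrt a\le\tilde F_t\le\sqrt B$, so $\sigma_f$ never appears.

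Two secondary points. First, your treatment of the momentum deviation via the variance-reduction estimate of \cite{LG20} plus $L$-smoothness and the path-length bound is unnecessarily heavy and is left unexecuted; the paper disposes of the analogous term with a single Cauchy--Schwarz against the energy-dissipation sums $\sum_t r_{t+1,i}m_{t-1,i}^2$ and $\sum_t r_{t+1,i}g_{t,i}^2$ of Lemma \ref{lemrv}, which cost only a $T$-independent constant with the expected $(1-\beta)^{-1}$ factors, and there is no need to compare $m_t$ with $\nabla f(\theta_t)$ at all. Second, a minor index issue: your aligned quadratic involves $r_{t+1}$, so summing over $t=1,\dots,T$ yields $\min_i r_{T+1,i}$ rather than the stated $\min_i r_{T,i}$ (fixable by stopping at $t=T-1$, or by using $r_t$ as the paper does). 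On the credit side, your observation that the aligned term $-2\eta\,\nabla F(\theta_t)^\top r_{t+1}\nabla F(\theta_t)$ is pointwise nonpositive and so requires no conditional expectation is correct and would, were it not for the $\sigma_f$ issue, be a genuine simplification of the paper's measurability bookkeeping.
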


\begin{remark}
{(i) The stated bound is a hybrid estimate depending on both $T$ and $r_T$, which can be helpful in adjusting the base step size depending on the asymptotic behavior of $r_T$. } 
(ii) In Theorem \ref{thm4} below, we identify a sufficient condition for ensuring a lower threshold for $r_{T, i}$. Numerically, we observe that for reasonable choice of $\eta$, $r_{T,i}$ either stays above a positive threshold or decays but much slower than $1/\sqrt{T}$ (See Figures \ref{fig:minr}); in either case the result in Theorem \ref{thm2} ensures the convergence.
\begin{figure}[ht]
\centering
\includegraphics[width=0.5\linewidth]{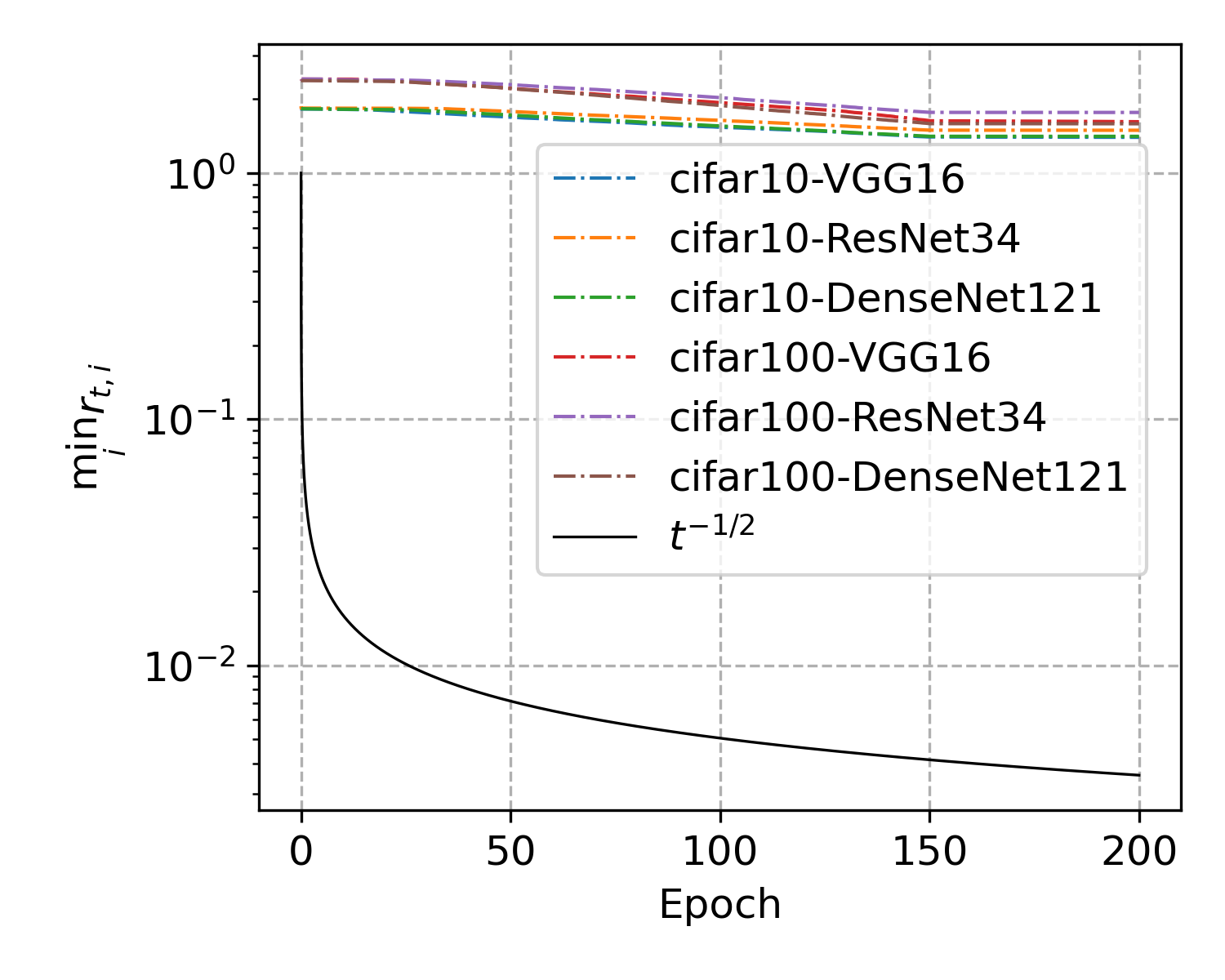}
\caption{$\min_i r_{t,i}$ of SGEM with default base learning rate $0.2$ in training DL tasks.}
\label{fig:minr}
\end{figure}%
(iii) The assumption that the magnitude of the stochastic gradient is bounded is standard in nonconvex stochastic analysis \cite{BC18}. The upper bound on the stochastic function value is technically needed to bound $m_t$ since $m_t = 2(1-\beta^t)\sqrt{f+c}v_t$. To bound $v_t$, we don't need an upper bound on $f$.
\end{remark}

\begin{proof} 
Using the $L$-smoothness of $f$, we have
\begin{equation}\label{Lfe}
f(\theta_{t+1})- f(\theta_t) \leq \nabla f(\theta_t)^\top(\theta_{t+1}-\theta_t)+\frac{L}{2}\|\theta_{t+1}-\theta_t\|^2. 
\end{equation}
The first term on the RHS is carefully regrouped as

\begin{align*}
-\frac{1-\beta}{1-\beta^t}\nabla f(\theta_t)^\top \eta_{t-1} r_{t}g_{t} + \frac{1-\beta}{1-\beta^t}\nabla f(\theta_t)^\top (\eta_{t-1} r_{t}-\eta_t r_{t+1}) g_{t}
-\frac{\beta}{1-\beta^t}\nabla f(\theta_t)^\top \eta_t r_{t+1}m_{t-1}.
\end{align*}

Taking a conditional expectation on the first term gives 
\begin{align*}
\quad\frac{1-\beta}{1-\beta^t}\eta_{t-1} \nabla f(\theta_t)^T r_t \nabla f(\theta_t)
\geq (1-\beta)\frac{\eta}{\sqrt{B}}\min_i r_{t, i}\|\nabla f(\theta_t)\|^2.    
\end{align*}
We manage to bound the other two terms in terms of  
$$
\sum_{i=1}^{n}\sum_{t=1}^{T}r_{t+1,i}g^2_{t,i}\quad\text{and}\quad \sum_{i=1}^{n}\sum_{t=1}^{T}r_{t+1,i}m^2_{t,i}.
$$
Their bounds are presented in Lemma \ref{lemrv}. The asserted bound then follows by further summation in $t$ with telescope cancellation for $f(\theta_{t+1})- f(\theta_t)$ and bounding the last term in (\ref{Lfe}) using (\ref{srev1+}).
\end{proof}

\subsection{Regret bound for Online convex optimization}
Our algorithm is also applicable to the online optimization that deals with the optimization problems having no or incomplete knowledge of the future (online). In the framework proposed in \cite{Zin03},  at each step $t$, the goal is to predict the parameter $\theta_t\in\mathcal{F}$, where $\mathcal{F}\subset\R^n$ is a feasible set, and evaluate it on a previously unknown loss function $f_t$. The nature of the sequence is unknown in advance, the SGEM algorithm needs to be modified. This can be done by replacing $f(\theta_t, \xi_t)$ by $f_t(\theta_t)$ and taking  
$
g_t =\nabla f_t(\theta_t)
$
in $v_t$ defined in (\ref{mv}), i.e., 
\begin{subequations}\label{onv} 
\begin{align}
&m_{t} = \beta m_{t-1} + (1-\beta)\nabla f_t(\theta_t),\\
&v_{t}= \frac{m_{t}}{2(1-\beta^t)\sqrt{f_t(\theta_t)+c}}.  
\end{align}    
\end{subequations}
This algorithm is also unconditionally energy stable as pointed out in Remark \ref{rem4.1}. For convergence, we evaluate our algorithm using the regret, that is the sum of all the previous difference between the online prediction $f_t(\theta_t)$ and the best fixed point parameter $f_t(\theta^*)$ from a feasible set $\mathcal{F}$: 
$$
R(T)=\sum_{t=1}^{T}[f_t(\theta_t)-f_t(\theta^*)],
$$
where $\theta^* = {\rm argmin}_{\theta\in \mathcal{F}} \sum_{t=1}^{T}f_t(\theta)$. 
For convex objectives we have the following regret bound.  
\begin{theorem}
\label{thm3} Let $\{\theta_t\}$ be the solution sequence generated by SGEM with a fixed $\eta>0$. Assume that $\|x-y\|_\infty\leq D_\infty$ for all $x,y\in\mathcal{F}$, $0<a\leq f_t(\theta_t)+c\leq B$, and $\theta_t\in\mathcal{F}$ for all $t\in[T]$. When $\mathcal{F}$ and $f_t$ are convex, SGEM achieves the following bound on the regret, for all $T\geq 1$,
\begin{equation}\label{rgt}
R(T)\leq C\sqrt{nT/\eta}\left(\sum_{i=1}^{n} \frac{1}{r_{T,i}}\right)^{1/2},
\end{equation}
where $C$ is a constant depending on $\beta, B, D_\infty$ and $f_1(\theta_1)+c$.
\end{theorem}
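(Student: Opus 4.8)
The plan is to start from convexity, which gives $f_t(\theta_t)-f_t(\theta^*)\le \langle g_t,\theta_t-\theta^*\rangle$ with $g_t=\nabla f_t(\theta_t)$, so that $R(T)\le \sum_{i=1}^n\sum_{t=1}^T g_{t,i}d_{t,i}$ where I write $d_{t,i}:=\theta_{t,i}-\theta^*_i$. The crucial difficulty is that Theorem \ref{thm3} assumes \emph{no} bound on the gradient: the constant $C$ depends only on $\beta,B,D_\infty$ and $f_1(\theta_1)+c$, so the sole control on the $g_t$ must come through the energy mechanism. I would therefore arrange the whole estimate so that the raw gradient never appears alone, but only through the actual displacement $\theta_{t,i}-\theta_{t+1,i}$, whose squared sum is already controlled by the energy identity behind Theorem \ref{thm1}.

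To eliminate $g_t$ in favor of $m_t$, I would substitute $g_t=(m_t-\beta m_{t-1})/(1-\beta)$ and shift the summation index (using $m_0=0$) to obtain, for each coordinate,
\begin{equation*}
\sum_{t=1}^T g_{t,i}d_{t,i}=\frac{1}{1-\beta}\Bigl[\sum_{t=1}^{T-1}m_{t,i}\bigl(d_{t,i}-\beta d_{t+1,i}\bigr)+m_{T,i}d_{T,i}\Bigr].
\end{equation*}
The state update then yields the pointwise identity $m_{t,i}=w_{t,i}(\theta_{t,i}-\theta_{t+1,i})$ with $w_{t,i}:=(1-\beta^t)\tilde F_t/(\eta\, r_{t+1,i})$ and $\tilde F_t=\sqrt{f_t(\theta_t)+c}$, so that $|m_{t,i}|=w_{t,i}\sqrt{s_{t,i}}$ where $s_{t,i}:=(\theta_{t+1,i}-\theta_{t,i})^2$. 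Bounding $|d_{t,i}-\beta d_{t+1,i}|\le(1+\beta)D_\infty$ and $|d_{T,i}|\le D_\infty$ by the diameter assumption collapses the display to
\begin{equation*}
\sum_{t=1}^T g_{t,i}d_{t,i}\le \frac{(1+\beta)D_\infty}{1-\beta}\sum_{t=1}^T w_{t,i}\sqrt{s_{t,i}}.
\end{equation*}

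The key step is a single Cauchy--Schwarz in $t$, namely $\sum_t w_{t,i}\sqrt{s_{t,i}}\le(\sum_t w_{t,i})^{1/2}(\sum_t w_{t,i}s_{t,i})^{1/2}$, which splits the bound into two factors governed by different mechanisms. For the second factor I would use the energy update in the form $2\eta\, r_{t+1,i}v_{t,i}^2=r_{t,i}-r_{t+1,i}$ to get $w_{t,i}s_{t,i}=2(1-\beta^t)\tilde F_t(r_{t,i}-r_{t+1,i})\le 2\sqrt B\,(r_{t,i}-r_{t+1,i})$, whose sum telescopes to $2\sqrt B\,r_{1,i}=2\sqrt B\sqrt{f_1(\theta_1)+c}$, a constant. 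For the first factor I would bound crudely $\tilde F_t\le\sqrt B$ and, since $r_{t+1,i}\ge r_{T,i}$ by the monotonicity from Theorem \ref{thm1}, $\sum_t w_{t,i}\le(\sqrt B/\eta)\sum_t r_{t+1,i}^{-1}\le \sqrt B\,T/(\eta\, r_{T,i})$; this is where the factor $T$, hence the $\sqrt T$, enters. Combining the two factors gives a per-coordinate bound proportional to $\sqrt{T/(\eta\, r_{T,i})}$, and a final Cauchy--Schwarz over coordinates, $\sum_i r_{T,i}^{-1/2}\le\sqrt n\,(\sum_i r_{T,i}^{-1})^{1/2}$, produces the stated estimate with $C$ depending only on $\beta,B,D_\infty$ and $f_1(\theta_1)+c$. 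I expect the main obstacle to be the momentum bookkeeping in the first display together with the realization that, lacking any gradient bound, one must keep $|m_{t,i}|$ tied to $\sqrt{s_{t,i}}$ so that the Cauchy--Schwarz can trade a factor $\sqrt T$ against the energy quantity $r_{T,i}^{-1/2}$; the telescoping of $w_{t,i}s_{t,i}$ and the diameter bounds are then routine.
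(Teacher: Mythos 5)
Your proof is correct, and it takes a recognizably different route from the paper's. The paper applies one global Cauchy--Schwarz over the double sum, writing $g_{t,i}d_{t,i}=\bigl(\sqrt{r_{t+1,i}}\,g_{t,i}\bigr)\bigl(d_{t,i}/\sqrt{r_{t+1,i}}\bigr)$, and then controls the gradient factor by the argument of Lemma~\ref{lemrv}(iv): substitute $g_t=\frac{1}{1-\beta}m_t-\frac{\beta}{1-\beta}m_{t-1}$ inside the quadratic form, use $(a+b)^2\le 2a^2+2b^2$, $m_t=2(1-\beta^t)\tilde F_t v_t$ with $\tilde F_t\le\sqrt B$, and the telescoping identity $2\eta r_{t+1,i}v_{t,i}^2=r_{t,i}-r_{t+1,i}$, obtaining $\sum_{i,t}r_{t+1,i}g_{t,i}^2\le 8Bn\sqrt{f_1(\theta_1)+c}/((1-\beta)^2\eta)$; the factor $T$ then enters through $\sum_{i,t}d_{t,i}^2/r_{t+1,i}\le D_\infty^2 T\sum_i r_{T+1,i}^{-1}$. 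You instead dispose of the momentum by Abel summation, shifting $\beta m_{t-1}$ onto the comparator differences $d_{t,i}-\beta d_{t+1,i}$ (valid precisely because $m_0=0$), tie $m_{t,i}$ \emph{exactly} to the displacement via the state update, and run the Cauchy--Schwarz per coordinate in $t$ with weights $w_{t,i}$. Your second factor is the paper's weighted telescoping sum in disguise, since $w_{t,i}s_{t,i}=4\eta(1-\beta^t)\tilde F_t r_{t+1,i}v_{t,i}^2=2(1-\beta^t)\tilde F_t(r_{t,i}-r_{t+1,i})\le 2\sqrt B\,(r_{t,i}-r_{t+1,i})$, while your first factor plays the role of the paper's diameter term. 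Both arguments thus rest on the same two mechanisms --- the energy telescoping plus monotonicity of $r$, and the diameter bound --- and both correctly avoid any gradient-magnitude assumption, as you rightly insist the statement requires; but your bookkeeping yields a marginally sharper constant, $\sqrt2\,(1+\beta)$ in place of the paper's $2\sqrt2$ in the $\beta$-dependent factor. One pedantic point, which you share with the paper itself: monotonicity gives $r_{t+1,i}\ge r_{T+1,i}$ for all $t\le T$, whereas $r_{t+1,i}\ge r_{T,i}$ holds only for $t\le T-1$, so the clean conclusion of your first-factor estimate is $\sum_{t=1}^{T}r_{t+1,i}^{-1}\le T/r_{T+1,i}$; the paper's own proof likewise terminates with $r_{T+1,i}$ while its statement displays $r_{T,i}$, so this off-by-one is inherited from the source rather than a defect of your argument.
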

\begin{remark} 
(i)  
If $r_{T, i}> r^*_i>0$ as in (\ref{aa}), then $R(T)$ is of order $O(\sqrt{T})$, 
which is known the best possible bound for online convex optimization [See Section 3.2 in \cite{Ha19}], 
hence the convergence holds true in the sense that
$$
\lim_{T \to \infty} \frac{R(T)}{T}=0.
$$
(ii) The bound on $\theta_t$
is typically enforced by projection onto $\mathcal{F}$ \cite{Zin03}, with which the regret bound (\ref{rgt}) can still be proven since projection is a contraction operator \cite[Chapter 3]{Ha19}. As for the upper bound on the function value, just like we remarked for Theorem \ref{thm2}, it is technically needed to bound $m_t$.
\end{remark}

\section{Numerical experiments}\label{exp}
In this section, we compare the performance of the proposed SGEM and AEGD with several other methods, including SGDM, AdaBelief \cite{ZT+20}, AdaBound \cite{LX19}, RAdam \cite{LJ+20}, Yogi \cite{ZR+18}, and Adam \cite{KB17}, when applied to training deep neural networks. \footnote{Code is available at \url{https://github.com/txping/SGEM}.} 
We consider 
three convolutional neural network (CNN) architectures: VGG-16 \cite{SZ15}, ResNet-34 \cite{HZ16}, DenseNet-121 \cite{HL+17} on the CIFAR-100 dataset \cite{KH09}; we also conduct experiments on ImageNet dataset \cite{ILSVRC15} with the ResNet-18 architecture \cite{HZ16}. 

For experiments on CIFAR-100, we employ the fixed budget of 200 epochs and reduce the learning rates by 10 after 150 epochs. The weight decay and minibatch size are set as $5\times 10^{-4}$ and $128$ respectively. For the ImageNet tasks, we run 90 epochs and use similar learning rate decaying strategy at the 30th and 60th epoch. The weight decay and minibatch size are set as $1\times 10^{-4}$ and $256$ respectively.

In each task, we only tune the base learning rate and report the one that achieves the best final generalization performance for each method:

\begin{itemize}
\item SGEM: For CIFAR-100 tasks, we use the default parameter $\eta=0.2$; for the ImageNet task, the learning rate is set as $\eta=0.3$. 
\item SGDM, AEGD:  We search learning rate among $\{0.05, 0.1, 0.2\}$.
\item AdaBelief, AdaBound, Yogi, RAdam, Adam: We search learning rate among $\{0.0005, 0.001, 0.01\}$, other hyperparameters such as $\beta_1, \beta_2, \epsilon$ are set as the default values in their literature.
\end{itemize}


\begin{figure*}[ht]
\begin{subfigure}[b]{0.33\linewidth}
\centering
\includegraphics[width=1\linewidth]{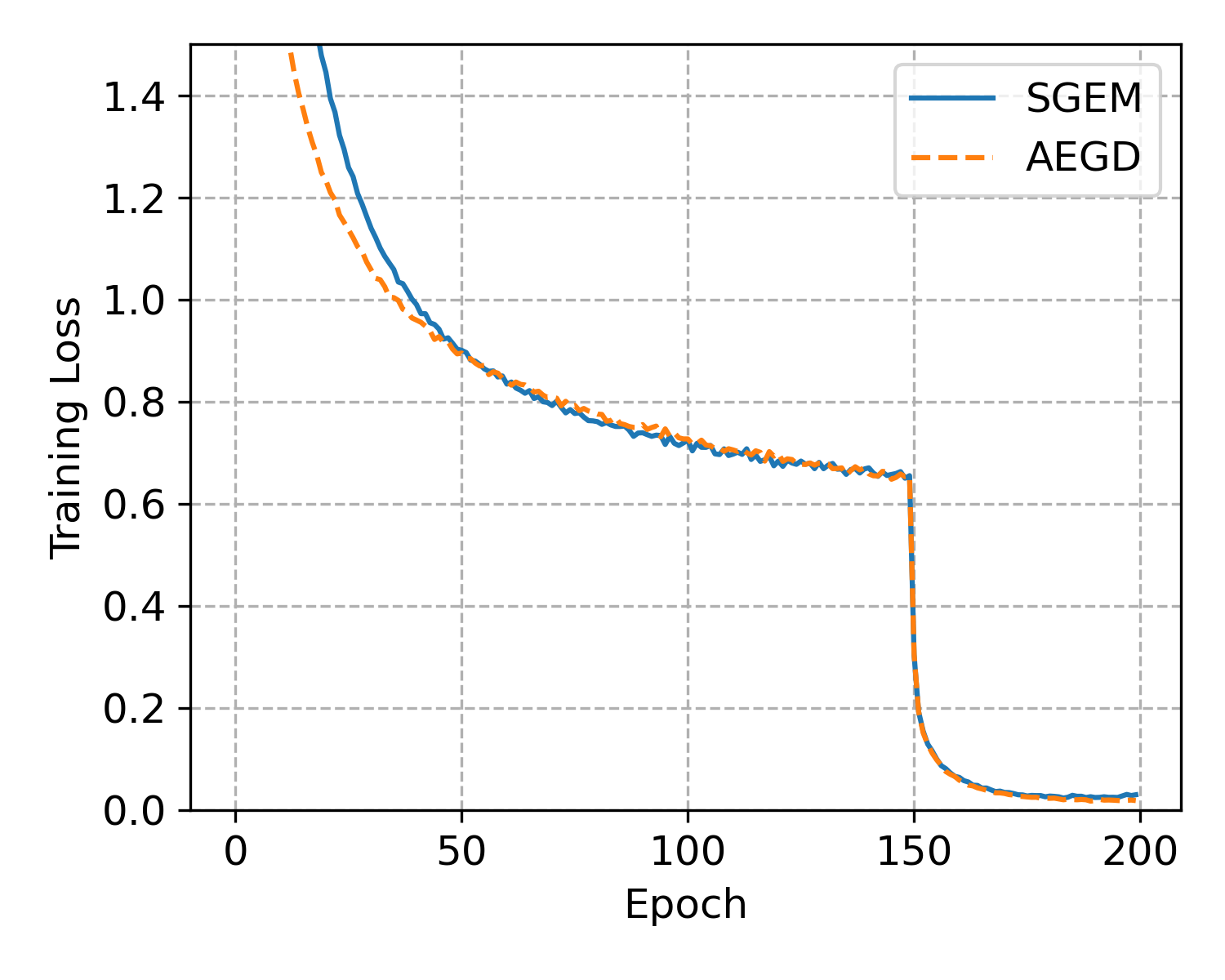}
\caption{VGG-16 training}
\end{subfigure}%
\begin{subfigure}[b]{0.33\linewidth}
\centering
\includegraphics[width=1\linewidth]{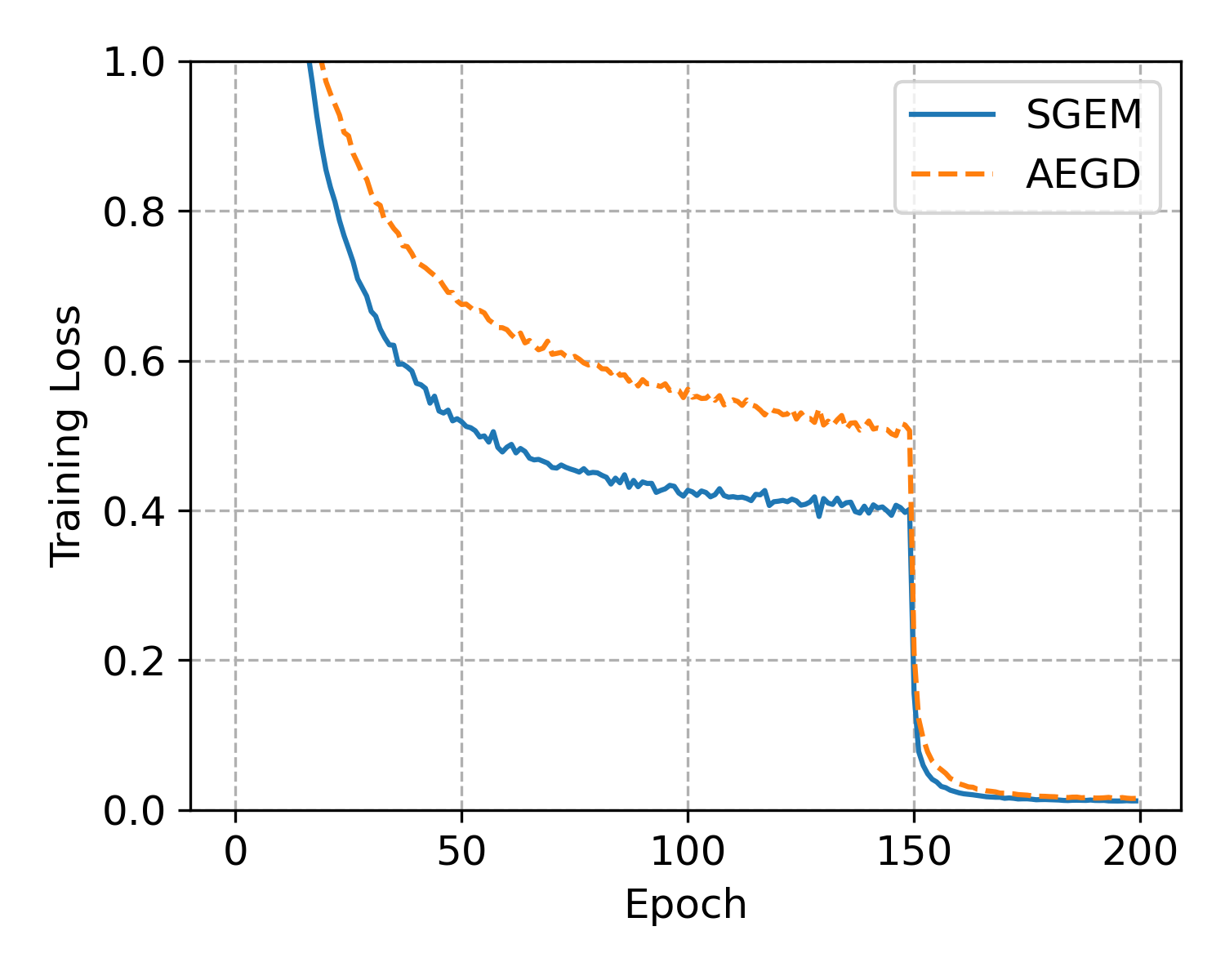}
\caption{ResNet-34 training}
\end{subfigure}%
\begin{subfigure}[b]{0.33\linewidth}
\centering
\includegraphics[width=1\linewidth]{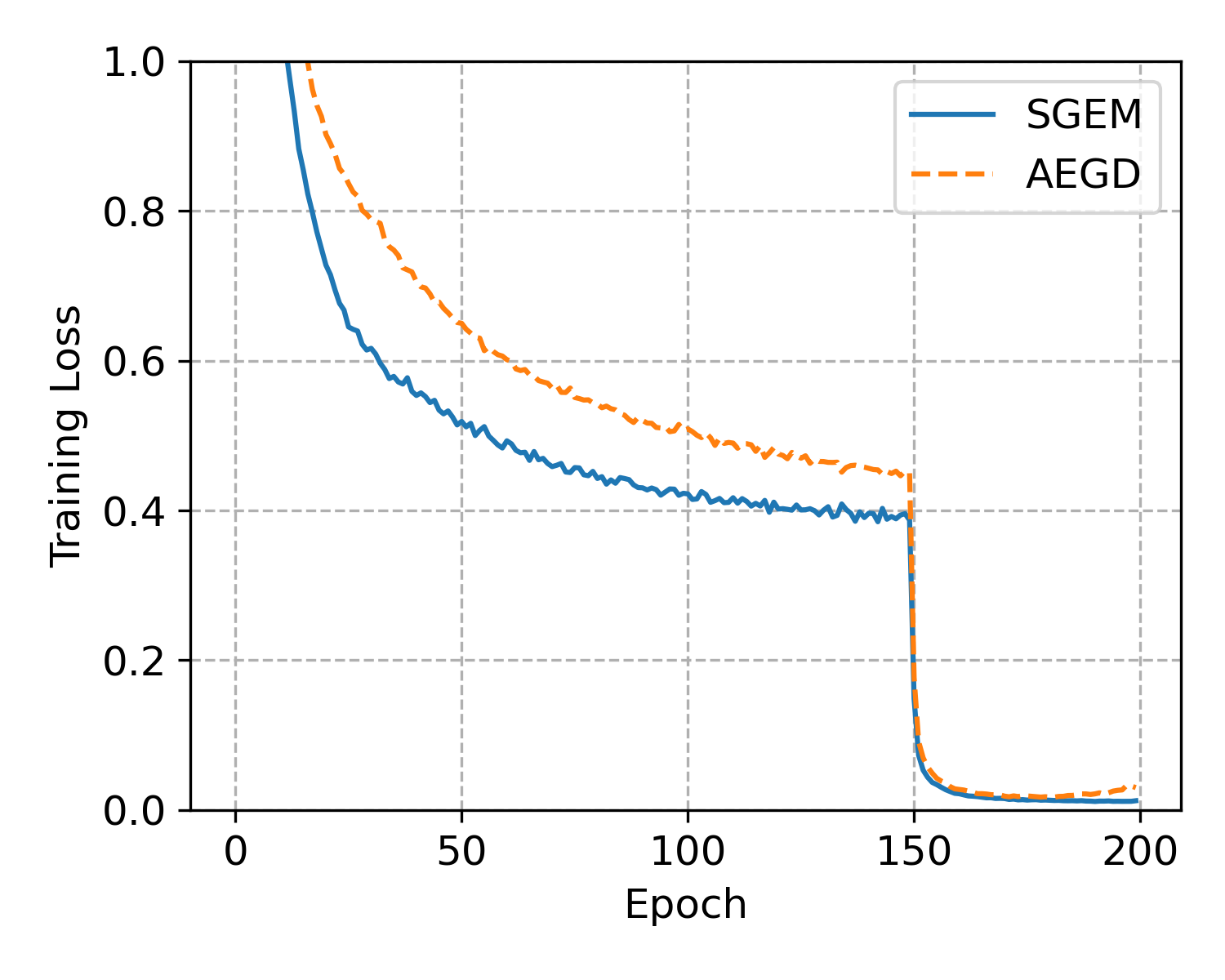}
\caption{DenseNet-121 training}
\end{subfigure}%
\newline
\begin{subfigure}[b]{0.33\linewidth}
\centering
\includegraphics[width=1\linewidth]{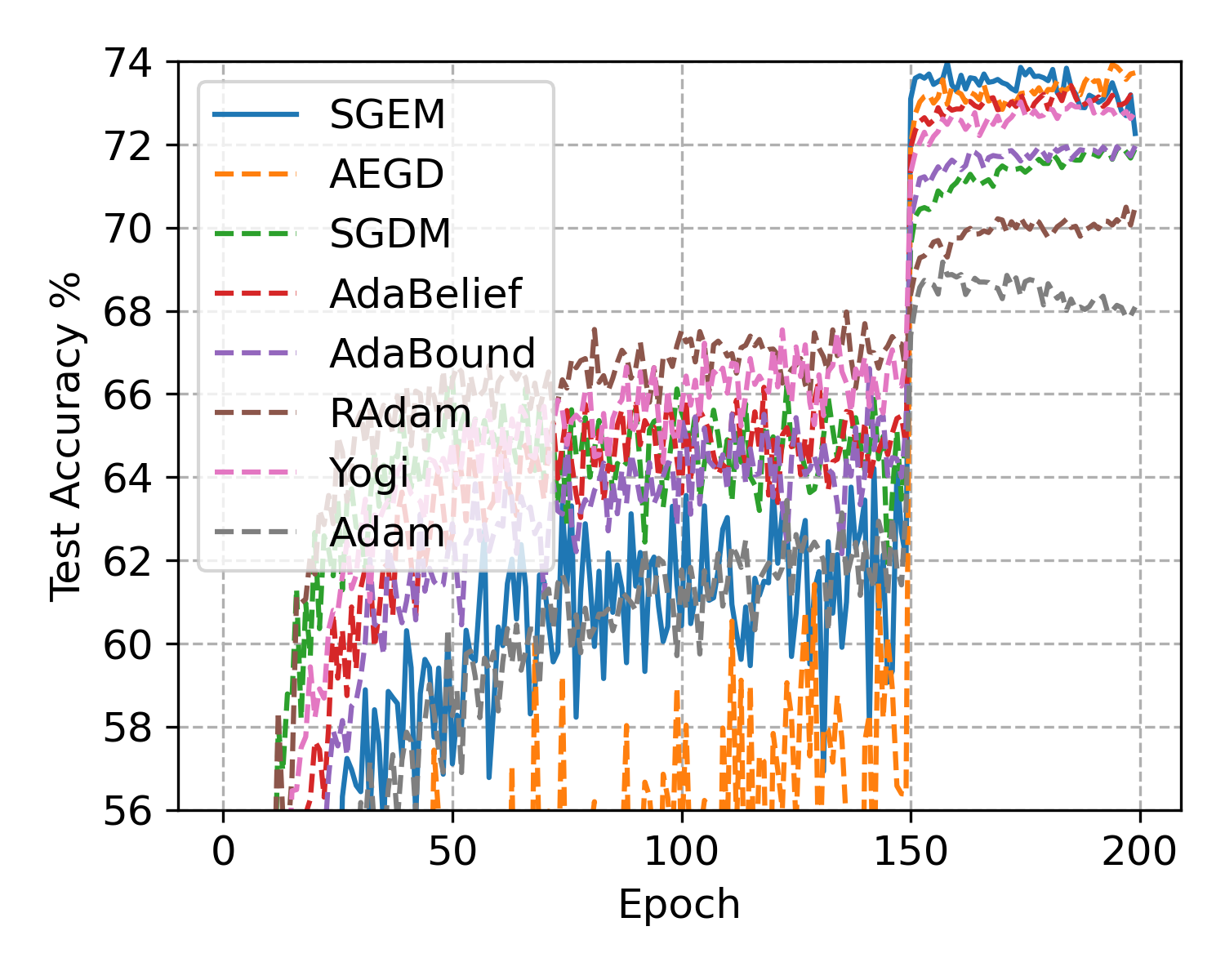}
\caption{VGG-16 test}
\end{subfigure}%
\begin{subfigure}[b]{0.33\linewidth}
\centering
\includegraphics[width=1\linewidth]{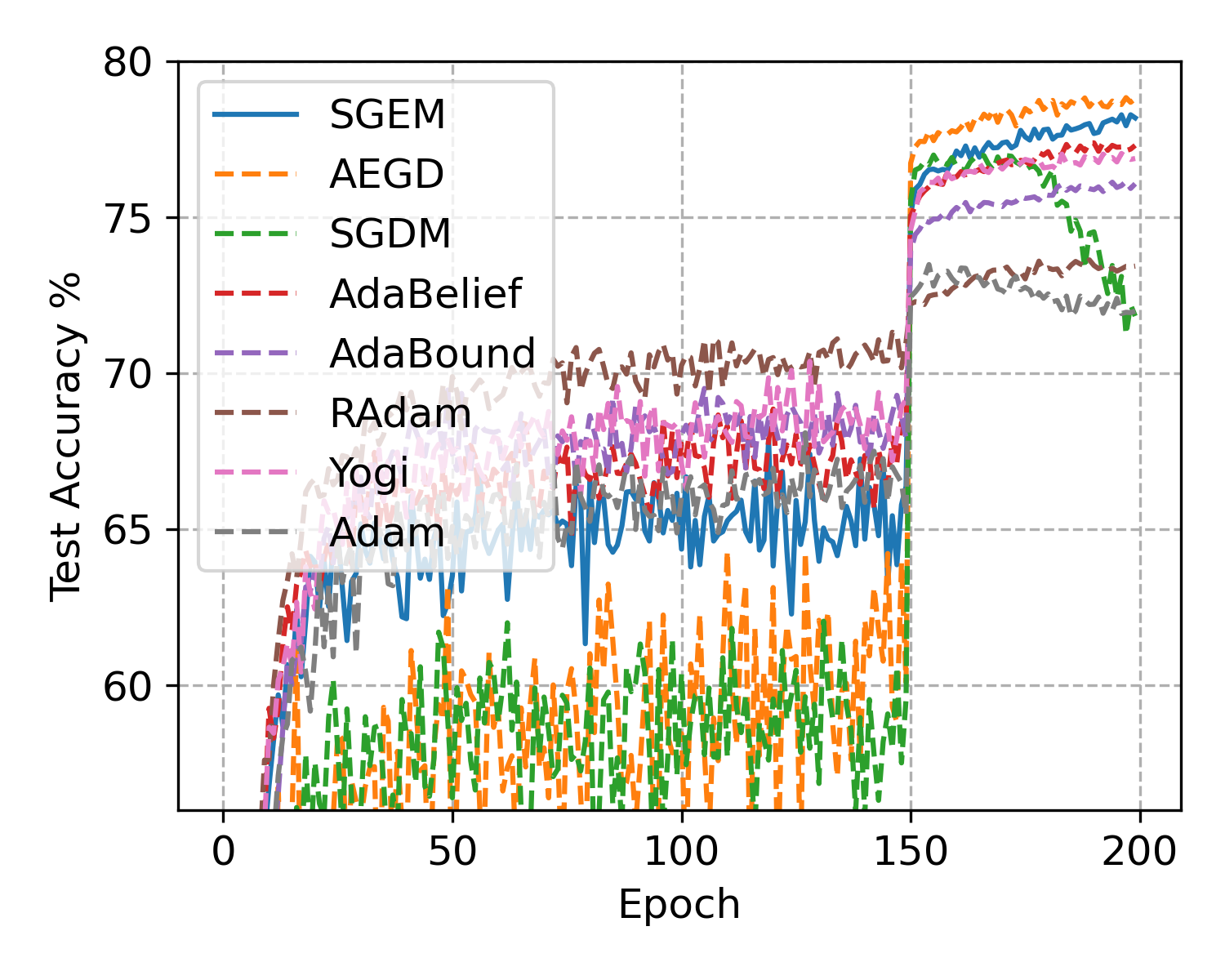}
\caption{ResNet-34 test}
\end{subfigure}%
\begin{subfigure}[b]{0.33\linewidth}
\centering
\includegraphics[width=1\linewidth]{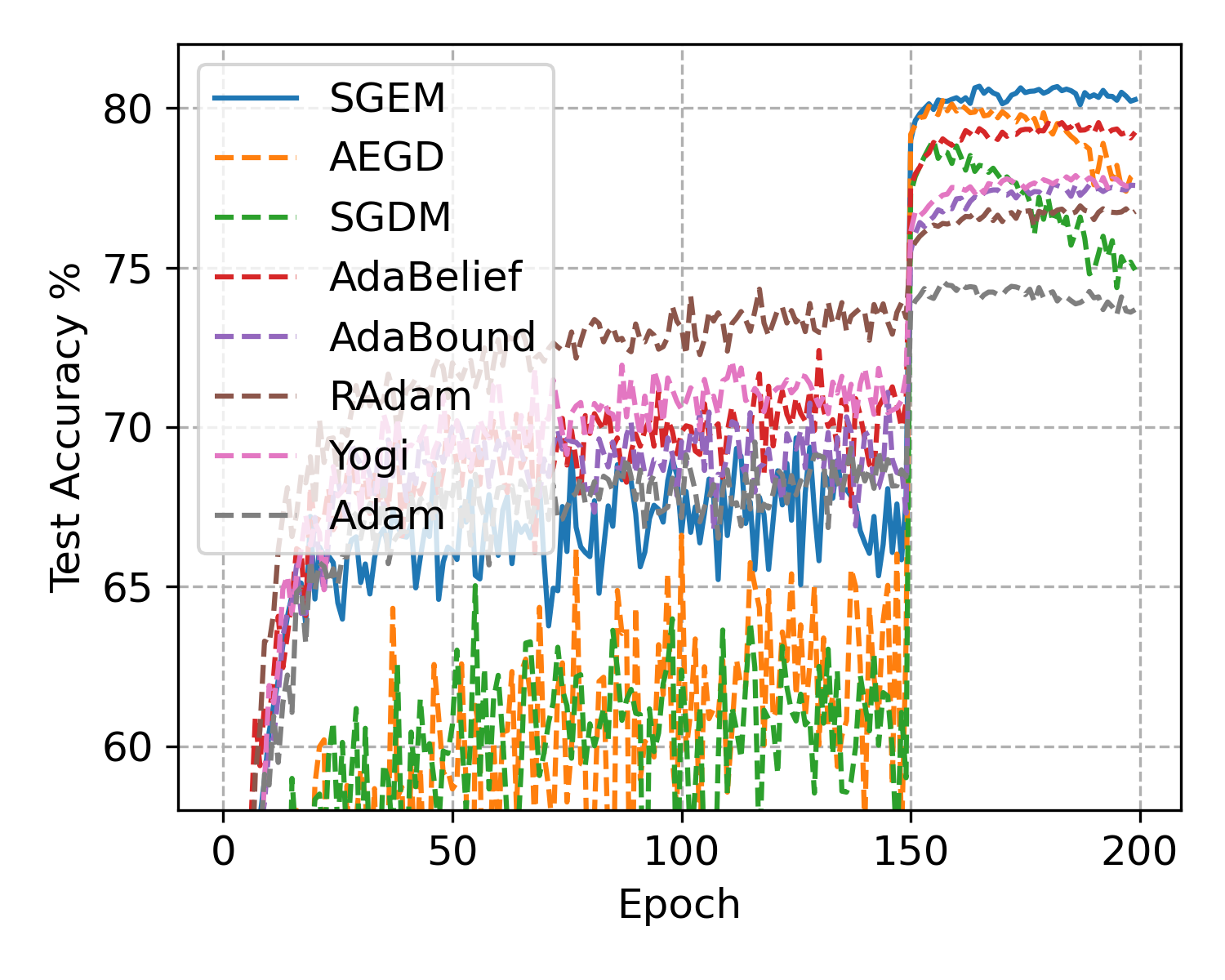}
\caption{DenseNet-121 test}
\end{subfigure}%
\captionsetup{format=hang}
\caption{Training Loss and test accuracy for VGG-16, ResNet-34 and DenseNet-121 on CIFAR-100}
\label{fig:cifar10}
\end{figure*}

\begin{figure*}[h!]
\begin{subfigure}[b]{0.5\linewidth}
\centering
\includegraphics[width=0.7\linewidth]{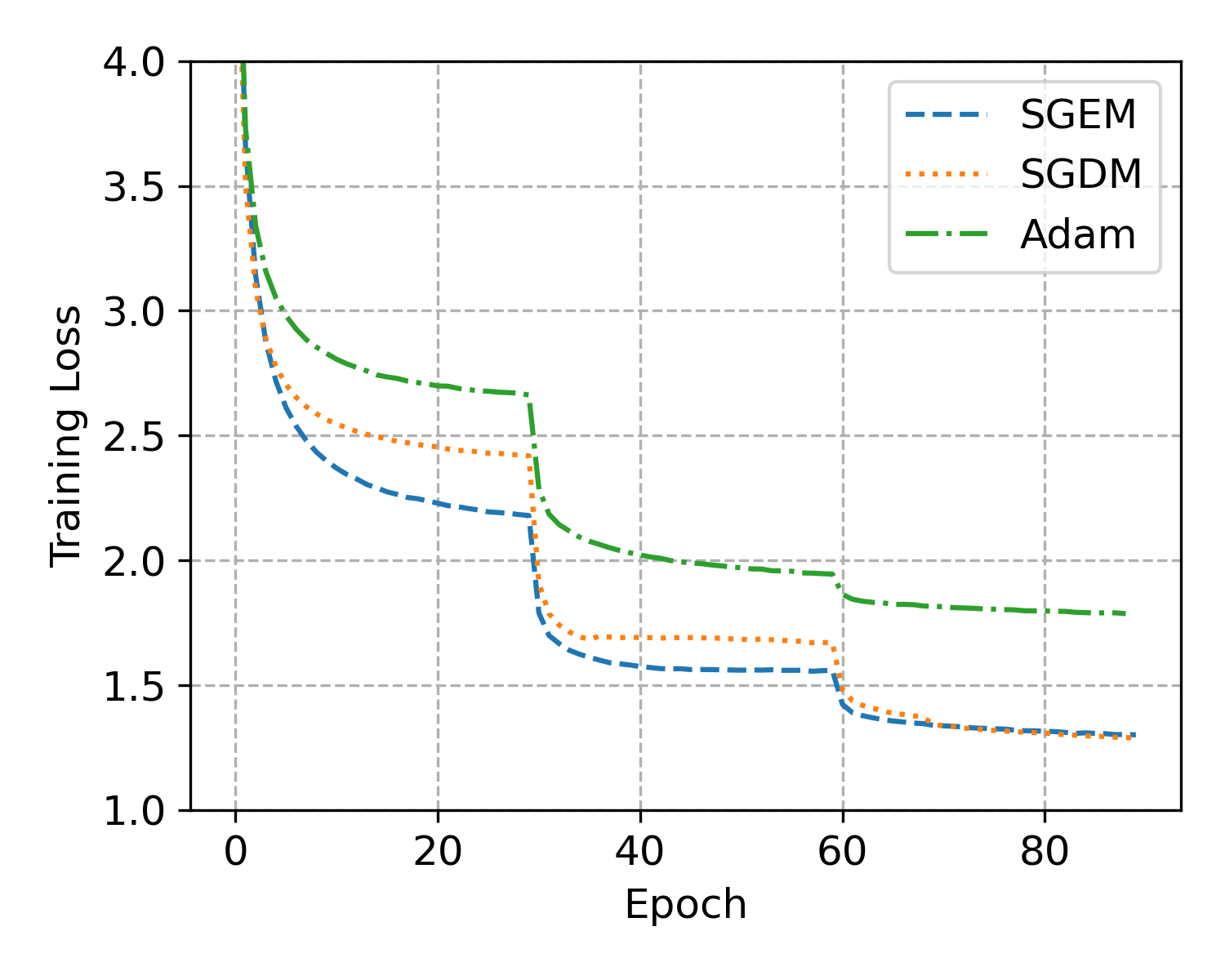}
\end{subfigure}%
\begin{subfigure}[b]{0.5\linewidth}
\centering
\includegraphics[width=0.7\linewidth]{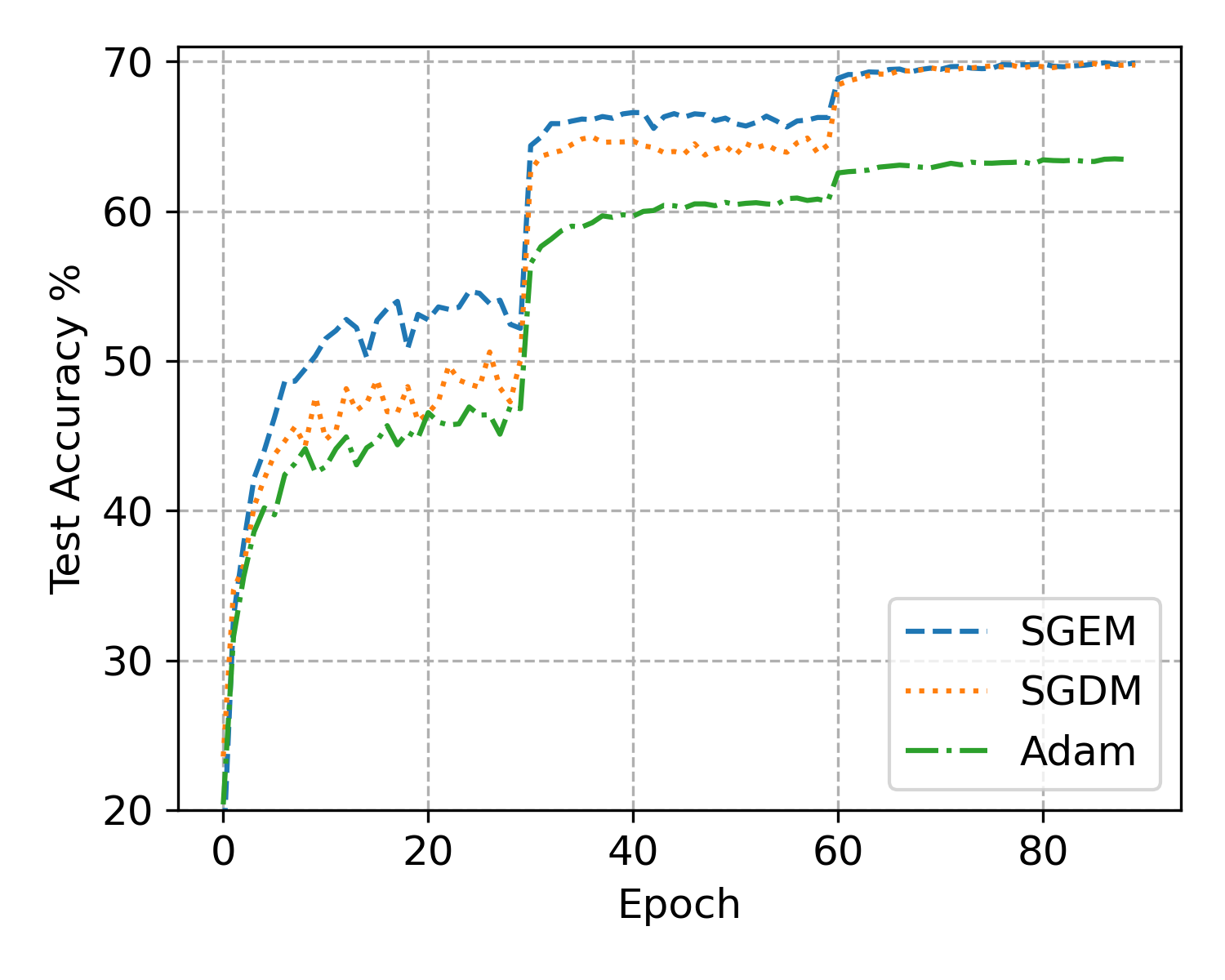}
\end{subfigure}%
\caption{Training loss and test accuracy for ResNet-18 on ImageNet}
\label{fig:imagenet}
\end{figure*}

From the experimental results of CIFAR-100, we see that in all tasks, SGEM and AEGD achieve higher test accuracy than the other methods while the oscillation of AEGD in test accuracy is significantly reduced by SGEM as expected. {To highlight the advantage of SGEM over AEGD, we also present the training loss of the two methods in each experiment, which shows SGEM indeed displays faster convergence than AEGD in most cases.}

For the ImageNet task, all existing experiments show that SGDM gives the highest test accuracy, we therefore focus only on the comparison between SGDM and SGEM, and run Adam only as a representative of other adaptive methods. The results are presented in Figure \ref{fig:imagenet}. We see that SGEM still shows faster convergence and is able to achieve comparable test accuracy as SGDM in the end of training. Here the highest test accuracy achieved by SGDM and SGEM are $69.89$ and $69.92$, respectively.

\section{Conclusion}
In this paper, we propose and analyze SGEM, which integrates AEGD with momentum. We show that SGEM still enjoys the unconditional energy stability property as AEGD, while the use of momentum helps to reduce the variance of the stochastic gradient significantly, as verified in our experiments. We also provide convergence analysis in both online convex setting and the general stochastic nonconvex setting. Since our convergence results depend on the energy variable, a lower bound on the energy is also presented. Finally, we empirically show that SGEM converges faster than AEGD and generalizes better or at least as well as SGDM on several deep learning benchmarks.

{We believe these results are important. First, we go  in the direction of establishing a theory beyond the empirical results for this new class of optimization algorithms. Second, our convergence rates can indeed provide an explanation for the good performance of AEGD algorithms.  
}

{One of the limitations of the current analysis is the fact that the obtained bound depends on $r_T$. It would be better to derive an asymptotic bound for $r_T$ as in \cite{LO19} for estimating the adaptive learning rate $\eta_T$. However, the technique used in \cite{LO19} relies on the specific form of $\eta_T$, while analysis for $r_T$ with SGEM is far from straightforward. This issue will be addressed in our future work. 
} 

Momentum is known to help accelerate gradient vectors in right directions and reduce oscillations, thus leading to faster convergence. It is desirable to quantify such effects also in the theoretical convergence bounds. Unfortunately, this has not been well understood in the literature even for SGDM. For example, SGDM (\ref{sgdm}) with constant step size is shown in \cite{YY18, YJ19} to have convergence bound: 
$$
\min_{1\leq t\leq T} \E[\|\nabla f(\theta_t)\|^2]= O\left(1/\eta T + \eta \sigma^2/(1-\beta)\right),
$$
for general smooth nonconvex objectives. An improved bound in \cite{LG20} is 
$$
O\left(1/\eta T+\eta \sigma^2 \right),
$$
since common choices for $\beta$ are close to 1. 
In contrast, when $r_T$ admits a positive lower bound, our result in Theorem \ref{thm2} indeed ensures convergence.   

Based on our observations in this paper, we list some problems for future work. First, we believe there is a threshold for $\eta^*$, such that $r_T$ either tends to a positive number or decays slower than $1/\sqrt{T}$
if $\eta<\eta^*$. This issue merits a further theoretical investigation. Second, since $r_t$ is strictly decreasing, there is a room to limit $r_t$ for controlling its decay whenever necessary. A proper energy limiter can be of help.

\appendix

\section{Proof of Theorem \ref{thm4}}
For the proofs of Theorem \ref{thm4} and Theorem \ref{thm2}, we introduce notation 
\begin{equation}\label{Ft2}
\tilde F_t:=\sqrt{f(\theta_t; \xi_t)+c}.    
\end{equation}
The initial data for $r_i$ is taken as $r_{1, i}=\tilde F_1$. We also denote the update rule presented in Algorithm \ref{alg} as
\begin{equation}\label{theta}
\theta_{t+1}=\theta_{t}-2\eta r_{t+1} v_{t},
\end{equation}
where $r_{t+1}$ is viewed as a $n\times n$ diagonal matrix that is made up of $[r_{t+1,1},...,r_{t+1,i},...,r_{t+1,n}]$. 

\begin{lemma}\label{lempre}
Under the assumptions in Theorem \ref{thm4}, we have for all $t\in [T]$, 
\begin{enumerate}[label=(\roman*)]
\item $\|\nabla f(\theta_t)\|_\infty\leq G_\infty$.
\item $\E[(\tilde F_t)^2]= F^2(\theta_t)=f(\theta_t)+c$.
\item 
$\E[\tilde F_t]\leq F(\theta_t)$. In particular, $\E[r_{1,i}]= \E[\tilde F_1]\leq F(\theta_1)$ for all $i\in[n]$.
\item 
$\sigma^2_g=\E[\|g_t-\nabla f(\theta_t)\|^2]\leq G^2_\infty$ and $\sigma^2_f=\E[\lvert f(\theta_t;\xi_t)- f(\theta_t)\rvert^2]\leq B^2.$
\item $\E[\lvert F(\theta_t)-\tilde F_t\rvert]\leq \frac{1}{2\sqrt{a}}\sigma_f$.
\item $\E[\|\nabla F(\theta_t)-\frac{g_t}{2\tilde F_t}\|^2]\leq \frac{G^2_\infty}{8a^3}\sigma^2_f+\frac{1}{2a}\sigma^2_g.$
\end{enumerate}
\end{lemma}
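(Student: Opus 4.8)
The plan is to treat the six items in increasing order of difficulty. Items (i)--(iv) are direct consequences of the unbiasedness in Assumption~\ref{asp}(3) together with the uniform bounds $\|g_t\|_\infty\le G_\infty$ and $a\le f(\theta_t;\xi_t)+c\le B$, whereas the genuine work is in (v) and (vi). Two tools recur throughout: Jensen's inequality (for the concavity of $\sqrt{\,\cdot\,}$, for the convexity of a norm, and for passing from an $L^1$ bound to an $L^2$ bound), and the elementary identity $\sqrt{x}-\sqrt{y}=(x-y)/(\sqrt{x}+\sqrt{y})$, which linearizes the square root and is exactly what converts control of the function-value noise $\sigma_f$ into control of $\tilde F_t-F(\theta_t)$.

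For (i), since $\nabla f(\theta_t)=\E[g_t]$, Jensen's inequality for the convex map $\|\cdot\|_\infty$ gives $\|\nabla f(\theta_t)\|_\infty\le\E[\|g_t\|_\infty]\le G_\infty$. Item (ii) is linearity of expectation applied to $\tilde F_t^2=f(\theta_t;\xi_t)+c$ combined with unbiasedness of the function-value estimator. Item (iii) is Jensen applied to the concave map $\sqrt{\,\cdot\,}$, namely $\E[\tilde F_t]\le\sqrt{\E[f(\theta_t;\xi_t)+c]}=F(\theta_t)$, with the special case being the instance $t=1$ together with $r_{1,i}=\tilde F_1$. For (iv) I would use that a variance is at most the corresponding second moment---equivalently that $g_t$ and $\nabla f(\theta_t)$ (resp.\ $f(\theta_t;\xi_t)+c$ and $f(\theta_t)+c$) lie in a common bounded set---so that $G_\infty$ and $B$ control $\sigma_g$ and $\sigma_f$.

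For (v), the key step is to write $F(\theta_t)-\tilde F_t=\big(f(\theta_t)-f(\theta_t;\xi_t)\big)\big/\big(F(\theta_t)+\tilde F_t\big)$. Since $f(\theta_t)+c=\E[f(\theta_t;\xi_t)+c]\in[a,B]$, both $F(\theta_t)$ and $\tilde F_t$ are at least $\sqrt a$, so the denominator is at least $2\sqrt a$; taking absolute values and expectations, then applying Jensen to pass from $\E|f(\theta_t)-f(\theta_t;\xi_t)|$ to $\sigma_f$, yields the claimed bound $\sigma_f/(2\sqrt a)$.

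The main obstacle is (vi), which I would attack by the additive splitting
\begin{equation*}
\nabla F(\theta_t)-\frac{g_t}{2\tilde F_t}
=\underbrace{\frac{\nabla f(\theta_t)}{2F(\theta_t)}-\frac{\nabla f(\theta_t)}{2\tilde F_t}}_{A}
+\underbrace{\frac{\nabla f(\theta_t)-g_t}{2\tilde F_t}}_{B},
\end{equation*}
isolating the function-value noise in $A$ (whose numerator is the deterministic $\nabla f(\theta_t)$) from the gradient noise in $B$. Using $\|A+B\|^2\le 2\|A\|^2+2\|B\|^2$, the term $B$ is controlled by $\tilde F_t^2\ge a$ and the definition of $\sigma_g$, giving $2\E\|B\|^2\le\sigma_g^2/(2a)$; for $A$ I would reuse the square-root identity of (v) to rewrite its scalar factor as $(\tilde F_t-F(\theta_t))/(F(\theta_t)\tilde F_t)$, use $\|\nabla f(\theta_t)\|^2\le G_\infty^2$ and $F(\theta_t)\tilde F_t\ge a$, and invoke $\E[(\tilde F_t-F(\theta_t))^2]\le\sigma_f^2/(4a)$ to get $2\E\|A\|^2\le G_\infty^2\sigma_f^2/(8a^3)$; adding the two gives (vi). The point needing care is that the cross term in $\|A+B\|^2$ cannot be dropped by independence, since $A$ and $B$ both depend on $\xi_t$; this is why I would use the crude $2\|A\|^2+2\|B\|^2$ split rather than an orthogonal decomposition, at the cost of only the factor $2$ that the stated constants already absorb.
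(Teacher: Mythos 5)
Your proposal is correct and follows essentially the same route as the paper's proof: items (i)--(v) via Jensen and the identity $\sqrt{x}-\sqrt{y}=(x-y)/(\sqrt{x}+\sqrt{y})$, and for (vi) exactly the paper's splitting into a function-value-noise term and a gradient-noise term, bounded via $\|A+B\|^2\leq 2\|A\|^2+2\|B\|^2$ with the same constants $\frac{G_\infty^2}{8a^3}\sigma_f^2+\frac{1}{2a}\sigma_g^2$. Your closing remark on why the cross term cannot be discarded by independence is a correct observation that the paper leaves implicit, and your handling of (iv) via the shift by $c$ is, if anything, slightly more careful than the paper's.
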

\begin{proof}
(i) By assumption $\|g_t\|_\infty\leq G_\infty$, we have
$$\|\nabla f(\theta_t)\|_\infty=\|\E[g_t]\|_\infty\leq\E[\|g_t\|_\infty]\leq G_\infty.$$
(ii) This follows from the unbiased sampling of 
$$
f(\theta_t)=\E_{\xi_t}[ f(\theta_t; \xi_t)].
$$
(iii) By Jensen's inequality, we have
$$\E[\tilde F_t] 
\leq\sqrt{\E[\tilde F_t^2]}=\sqrt{F(\theta_t)^2}=F(\theta_t).$$
(iv) By assumptions $\|g_t\|_\infty\leq G_\infty$ and $f(\theta_t;\xi_t)+c<B$, we have
$$
\sigma^2_g=\E[\|g_t-\nabla f(\theta_t)\|^2]
= \E[\|g_t\|^2] - \|\nabla f(\theta_t)\|^2\leq G^2_\infty,
$$
$$
\sigma^2_f=\E[\|f(\theta_t;\xi_t)-f(\theta_t)\|^2]
= \E[\|f(\theta_t;\xi_t)\|^2] - \| f(\theta_t)\|^2\leq B^2.
$$
(v) By the assumption $0<a\leq   f(\theta_t;\xi_t)+c=\tilde F_t^2$, we have
\begin{align*}
&\quad \E[\lvert F(\theta_t)-\tilde F_t\rvert]
\leq \E\Bigg[\bigg\lvert\frac{f(\theta_t)-f(\theta_t;\xi_t)}{F(\theta_t)+\tilde F_t}\bigg\rvert\Bigg] \leq \frac{1}{2\sqrt{a}}\E[\lvert f(\theta_t)-f(\theta_t;\xi_t)\rvert] 
\leq \frac{1}{2\sqrt{a}}\sigma_f.
\end{align*}
(vi) By the definition of $F(\theta)$, we have
\begin{align*}
\|\nabla F(\theta_t)-\frac{g_t}{2\tilde F_t}\|^2
&= \bigg\|\frac{\nabla f(\theta_t)}{2F(\theta_t)}-\frac{g_t}{2\tilde F_t}\bigg\|^2\\
&= \frac{1}{4}\bigg\|\frac{\nabla f(\theta_t)(\tilde F_t-F(\theta_t)) }{F(\theta_t)\tilde F_t}+\frac{\nabla f(\theta_t)-g_t}{\tilde F_t}\bigg\|^2\\
&\leq \frac{1}{2} \bigg\|\frac{\nabla f(\theta_t)(\tilde F_t-F(\theta_t)) }{F(\theta_t)\tilde F_t}\bigg\|^2 + \frac{1}{2}\bigg\|\frac{\nabla f(\theta_t)-g_t}{\tilde F_t}\bigg\|^2\\
&\leq \frac{G^2_\infty}{2a^{2}}\lvert\tilde F_t-F(\theta_t)\rvert^2+\frac{1}{2a}\|\nabla f(\theta_t)-g_t\|^2, 
\end{align*}
where both the gradient bound and the assumption that $0<a\leq f(\theta_t;\xi_t)+c=\tilde F^2_t$ are essentially used.  Take an expectation to get 
\begin{align*}
\E[\|\nabla F(\theta_t)-\frac{g_t}{2\tilde F_t}\|^2]\leq \frac{G^2_\infty}{2a^{2}}\E[\lvert\tilde F_t-F(\theta_t)\rvert^2]+\frac{1}{2a}\E[\|\nabla f(\theta_t)-g_t\|^2].
\end{align*}
Similar to the proof for ($iv$), we have
$$
\E[\lvert\tilde F_t-F(\theta_t)\rvert^2]\leq \frac{1}{4a}\sigma^2_f.
$$
This together with the variance assumption  for $g_t$ gives
\begin{equation*}
\E[\|\nabla F(\theta_t)-\frac{g_t}{2\tilde F_t}\|^2]\leq \frac{G^2_\infty}{8a^3}\sigma^2_f+\frac{1}{2a}\sigma^2_g.
\end{equation*}
\end{proof}

\begin{lemma}\label{lemrv}
For any $T\geq1$, we have
\begin{enumerate}[label=(\roman*)]
\item $\E\Big[\sum_{t=1}^{T} v_t^\top r_{t+1} v_t\Big] \leq \frac{n F(\theta_1)}{2\eta}$.
\item $\E\Big[\sum_{t=1}^{T} m_{t-1}^\top r_{t+1} m_{t-1}\Big]\leq \E\Big[\sum_{t=1}^{T} m_t^\top r_{t+1} m_t\Big] \leq \frac{2Bn F(\theta_1)}{\eta}$.
\item $\E\Big[\sum_{t=1}^{T}\|r_{t+1}m_t\|^2\Big] \leq \frac{2Bn F^2(\theta_1)}{\eta}$.
\item $\E\Big[\sum_{t=1}^{T} g_t^\top r_{t+1} g_t\Big] \leq \frac{8Bn F(\theta_1)}{(1-\beta)^2\eta}$.
\item $\E\Big[\sum_{t=1}^{T}\|r_{t+1}g_t\|^2\Big] \leq \frac{8Bn F^2(\theta_1)}{(1-\beta)^2\eta}$.
\end{enumerate}
\end{lemma}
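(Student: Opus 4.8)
The common engine for all five bounds is the pathwise energy identity read off from the update $r_{t+1,i}=r_{t,i}/(1+2\eta v_{t,i}^2)$, namely
\begin{equation}\label{eng}
2\eta\, r_{t+1,i}\, v_{t,i}^2 = r_{t,i}-r_{t+1,i},
\end{equation}
together with the relation $m_{t,i}=2(1-\beta^t)\tilde F_t\, v_{t,i}$ coming from (\ref{mv}b). Since $1+2\eta v_{t,i}^2\ge 1$ and $r_{1,i}=\tilde F_1>0$, the sequence $\{r_{t,i}\}_t$ is positive and nonincreasing along every sample path. I would exploit this monotonicity throughout and take expectations only at the very end, after all estimates have been made pathwise; this keeps $r_{1,i}=\tilde F_1$ (and $r_{1,i}^2=\tilde F_1^2$) as the only random quantities to control and avoids any difficulty with products of correlated variables.

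First I would prove (i): summing (\ref{eng}) over $i$ and over $t$ telescopes to $\sum_i(r_{1,i}-r_{T+1,i})\le \sum_i r_{1,i}$, so that $\sum_{t}v_t^\top r_{t+1}v_t\le \tfrac1{2\eta}\sum_i \tilde F_1$ pathwise; taking expectations and invoking $\E[\tilde F_1]\le F(\theta_1)$ from Lemma \ref{lempre}(iii) yields the stated bound. For (ii) I would use $(1-\beta^t)^2\le 1$ and $\tilde F_t^2=f(\theta_t;\xi_t)+c\le B$ to get $m_{t,i}^2\le 4B\, v_{t,i}^2$, whence $m_t^\top r_{t+1}m_t\le 4B\, v_t^\top r_{t+1}v_t$ and the second inequality follows from (i). The first inequality in (ii) is purely a monotonicity statement: reindexing the $m_{t-1}$-sum (with $m_0=\mathbf 0$) turns it into $\sum_{s}m_s^\top r_{s+2}m_s$, which is dominated termwise by the $m_t$-sum because $r_{s+2,i}\le r_{s+1,i}$.

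For the squared-norm estimate (iii) I would combine $m_{t,i}^2\le 4B\, v_{t,i}^2$ with (\ref{eng}) to obtain $r_{t+1,i}^2 m_{t,i}^2\le \tfrac{2B}{\eta}\, r_{t+1,i}(r_{t,i}-r_{t+1,i})$, and then apply the elementary inequality $r_{t+1,i}(r_{t,i}-r_{t+1,i})\le \tfrac12(r_{t,i}^2-r_{t+1,i}^2)$, which holds precisely because $r_{t+1,i}\le r_{t,i}$. Telescoping the right-hand side to $\tfrac12 r_{1,i}^2$ and taking expectations with $\E[\tilde F_1^2]=F^2(\theta_1)$ from Lemma \ref{lempre}(ii) gives (iii). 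Finally, (iv) and (v) reduce to (ii) and (iii): writing $g_t=(m_t-\beta m_{t-1})/(1-\beta)$ from the momentum recursion and applying Young's inequality gives $g_{t,i}^2\le \tfrac{2}{(1-\beta)^2}\big(m_{t,i}^2+m_{t-1,i}^2\big)$, which is what produces the factor $(1-\beta)^{-2}$; the $m_{t-1}$-contributions are then controlled by the $m_t$-contributions exactly as in the reindexing steps used for (ii) (for the inner product in (iv)) and for (iii) (for the squared norm in (v)).

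I do not expect a deep obstacle: once (\ref{eng}) and $m_{t,i}=2(1-\beta^t)\tilde F_t v_{t,i}$ are in place, the five estimates form a short dependency chain (i)$\Rightarrow$(ii)$\Rightarrow$(iii) and (ii),(iii)$\Rightarrow$(iv),(v). The one step needing genuine care is the quadratic telescoping in (iii) and (v): the cruder bound $r_{t+1,i}\le\sqrt B$ would cost an extra factor $\sqrt B$ and miss the stated $F^2(\theta_1)$ dependence, so one must instead convert $r_{t+1,i}^2 v_{t,i}^2$ into the difference $r_{t,i}^2-r_{t+1,i}^2$ through the monotonicity inequality above. Note that the upper bound $B$ on $f+c$ is used only in passing from $m_t$ to $v_t$ and is not needed for the $v_t$-level bound (i), consistent with the observation following Theorem \ref{thm2} that bounding $v_t$ requires no upper bound on $f$.
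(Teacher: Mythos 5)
Your proof is correct and, for parts (i), (ii) and (iv), follows essentially the same route as the paper's: the same pathwise energy identity $r_{t,i}-r_{t+1,i}=2\eta r_{t+1,i}v_{t,i}^2$ with telescoping for (i), the bound $m_{t,i}^2\le 4B\,v_{t,i}^2$ from $m_t=2(1-\beta^t)\tilde F_t v_t$ for (ii), the same reindexing of the $m_{t-1}$-sum using $r_{t+1,i}\le r_{t,i}$ and $m_0=\mathbf{0}$, the same substitution $g_t=(m_t-\beta m_{t-1})/(1-\beta)$ with Young's inequality producing the $(1-\beta)^{-2}$ factor, and expectations deferred to the end via Lemma \ref{lempre}(ii)--(iii). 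Where you genuinely diverge is the squared-norm bounds (iii) and (v): the paper simply writes $r_{t+1,i}^2\le r_{1,i}\,r_{t+1,i}$ and pulls out the common factor $r_{1,i}=\tilde F_1$, reducing (iii) and (v) termwise to the linear bounds already proved, whereas you telescope quadratically via $r_{t+1,i}(r_{t,i}-r_{t+1,i})\le\tfrac12(r_{t,i}^2-r_{t+1,i}^2)$. Both mechanisms deliver the $F^2(\theta_1)$ dependence after $\E[\tilde F_1^2]=F^2(\theta_1)$, and yours in fact improves the constants in (iii) and (v) by a factor of $2$; the only inaccuracy is your closing claim that one \emph{must} convert to quadratic differences to avoid a stray $\sqrt B$ --- the paper's cheaper observation $r_{t+1,i}\le r_{1,i}=\tilde F_1$ (as opposed to the crude $r_{t+1,i}\le\sqrt B$) already achieves this without any quadratic telescoping.
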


\begin{proof}
From Algorithm \ref{alg} line 5, we have
$$
r_{t,i}-r_{t+1,i} = 2\eta r_{t+1,i}v^2_{t,i}.
$$
Taking summation over $t$ from $1$ to $T$ gives
$$
r_{1,i} - r_{T+1,i} = 2\eta\sum_{t=1}^{T} r_{t+1,i}v^2_{t,i}
\quad\Rightarrow\quad \sum_{t=1}^{T} r_{t+1,i}v^2_{t,i}\leq\frac{r_{1,i}}{2\eta}.
$$
From which we get
$$
\sum_{t=1}^{T}v_t^\top r_{t+1}v_t=\sum_{i=1}^{n}\sum_{t=1}^{T}r_{t+1,i}v^2_{t,i}\leq \frac{n\tilde F_1}{2\eta}.
$$
Taking expectation and using (iii) in Lemma \ref{lempre} gives (i). 

Recall that $m_t=2(1-\beta^t)\tilde F_tv_t$ and $\tilde F_t\leq\sqrt{B}$, we further get
$$
\sum_{t=1}^{T} m_t^\top r_{t+1} m_t
\leq 4B \sum_{t=1}^{T} v_t^\top r_{t+1}v_t
= \frac{2Bn\tilde F_1}{\eta}.
$$
Using $r_{t+1,i}\leq r_{t,i}$ and $m_{0,i}=0$, we also have
\begin{equation}\label{rm-1}
\sum_{i=1}^{n}\sum_{t=1}^{T}r_{t+1,i}m^2_{t-1,i}
\leq \sum_{i=1}^{n}\sum_{t=1}^{T}r_{t,i}m^2_{t-1,i}
= \sum_{i=1}^{n}\sum_{t=1}^{T-1}r_{t+1,i}m^2_{t,i}\leq \sum_{i=1}^{n}\sum_{t=1}^{T}r_{t+1,i}m^2_{t,i}.    
\end{equation}
Connecting the above two inequalities and taking expectation gives (ii). 

Using $r_{t+1,i}\leq r_{1,i}$, the above inequality further implies
\begin{align*}
\sum_{t=1}^{T}\|r_{t+1}m_t\|^2
&=\sum_{i=1}^n\sum_{t=1}^{T} r^2_{t+1, i}m_{t, i}^2  \leq \sum_{i=1}^n\sum_{t=1}^{T} r_{1,i} r_{t+1, i}m_{t, i}^2 \\
&=\bigg(\sum_{i=1}^n\sum_{t=1}^{T} r_{t, i}m_{t, i}^2\bigg)\tilde F_1
\leq 2Bn\tilde F^2_1/\eta.
\end{align*}
Taking expectation and using (ii) in Lemma \ref{lempre} gives (iii).

By $m_t=\beta m_{t-1}+(1-\beta)g_t$, we have
\begin{align*}
\sum_{t=1}^{T} g_t^\top r_{t+1} g_t 
&= \sum_{i=1}^{n}\sum_{t=1}^{T}r_{t+1,i}g^2_{t,i}
= \sum_{i=1}^{n}\sum_{t=1}^{T}r_{t+1,i}\bigg(\frac{1}{1-\beta}m_{t,i}-\frac{\beta}{1-\beta}m_{t-1,i}\bigg)^2\\
&\leq \frac{2}{(1-\beta)^2}\sum_{i=1}^{n}\sum_{t=1}^{T}r_{t+1,i}m^2_{t,i} + \frac{2\beta^2}{(1-\beta)^2}\sum_{i=1}^{n}\sum_{t=1}^{T}r_{t+1,i}m^2_{t-1,i}\\
&\leq \frac{2(1+\beta^2)}{(1-\beta)^2}\sum_{t=1}^{T} m_t^\top r_{t+1} m_t\leq \frac{8Bn\tilde F_1}{(1-\beta)^2\eta}.
\end{align*}
Here the third inequality is by $(a+b)^2\leq 2a^2+2b^2$; (\ref{rm-1}) and $0<\beta<1$ are used in the fourth inequality. Taking expectation and using (iii) in Lemma \ref{lempre} gives (iv).

Similar as the derivation for (ii), we have
\begin{align*}
\sum_{t=1}^{T}\|r_{t+1}g_t\|^2
\leq \bigg(\sum_{i=1}^n\sum_{t=1}^{T} r_{t, i}g_{t, i}^2\bigg)\tilde F_1
&\leq \frac{8Bn\tilde F^2_1}{(1-\beta)^2\eta}.
\end{align*}
Taking expectation and using (ii) in Lemma \ref{lempre} gives (v).
\end{proof}

First note that by (iv) in Lemma \ref{lempre}, $\max\{\sigma_g,\sigma_f\}\leq\max\{G_\infty,B\}$.\\
Recall that $F(\theta)=\sqrt{f(\theta)+c}$, then for any $x, y\in \{\theta_t\}_{t=0}^T$ we have 
\begin{align*}
\|\nabla F(x)-\nabla F(y)\|
&= \bigg\|\frac{\nabla f(x)}{2F(x)}-\frac{\nabla f(y)}{2F(y)}\bigg\|\\
&= \frac{1}{2}\bigg\|\frac{\nabla f(x)(F(y)-F(x))}{F(x)F(y)} + \frac{\nabla f(x)-\nabla f(y)}{F(y)}\bigg\|\\
&\leq \frac{G_\infty}{2(F(\theta^*))^2}|F(y)-F(x)| + \frac{1}{2F(\theta^*)}\|\nabla f(x)-\nabla f(y)\|.
\end{align*}
One may check that 
$$
\lvert F(y)-F(x)\rvert\leq \frac{G_\infty}{2F(\theta^*)}\|x-y\|.
$$
These together with the $L$-smoothness of $f$ lead to
\begin{equation*}
\|\nabla F(x)-\nabla F(y)\| \leq 
L_F \|x-y\|,
\end{equation*}
where 
$$
L_F=
\frac{1}{2\sqrt{f(\theta^*)+c}} \left( L+ \frac{G^2_\infty}{2(f(\theta^*)+c)}\right). 
$$
This confirms the $L_F$-smoothness of $F$, which yields 
\begin{align*}
 F(\theta_{t+1}) -  F(\theta_t)
& \leq\nabla F(\theta_t)^\top (\theta_{t+1}-\theta_t) +\frac{L_F}{2}\|\theta_{t+1}-\theta_t\|^2 \\
& = (\nabla F(\theta_t)-\frac{g_t}{2\tilde F_t})^\top (\theta_{t+1}-\theta_t) + (\frac{g_t}{2\tilde F_t}-\frac{1-\beta^t}{1-\beta}v_t)^\top (\theta_{t+1}-\theta_t)\\
&\quad +(\frac{1-\beta^t}{1-\beta}v_t)^\top (\theta_{t+1}-\theta_t) +\frac{L_F}{2}\|\theta_{t+1}-\theta_t\|^2. \\
\end{align*}
Summation of the above over $t$ from $1$ to $T$ and taken with the expectation gives 
\begin{equation}\label{ET1}
\E[F(\theta_{T+1})-F(\theta_{1})]\leq \sum_{i=1}^{4} S_i,    
\end{equation}
where 
\begin{align*}
&S_1=\E\Bigg[\sum_{t=1}^{T}\frac{1-\beta^t}{1-\beta}v_{t}^\top (\theta_{t+1}-\theta_t)\Bigg],\\
&S_2=\E\Bigg[\sum_{t=1}^{T}(\frac{g_t}{2\tilde F_t}-\frac{1-\beta^t}{1-\beta}v_t)^\top (\theta_{t+1}-\theta_t)\Bigg],\\
&S_3=\E\Bigg[\sum_{t=1}^{T}(\nabla F(\theta_t)-\frac{g_t}{2\tilde F_t})^\top (\theta_{t+1}-\theta_t)\Bigg],\\
&S_4=\E\Bigg[\sum_{t=1}^{T}\frac{L_F}{2}\|\theta_{t+1}-\theta_t\|^2\Bigg].
\end{align*}
Below we bound $S_1, S_2, S_3, S_4$ separately. To bound $S_1$, we first note that
\begin{align*}
r_{t+1,i}-r_{t,i}&=-2\eta r_{t+1,i}v^2_{t,i} = v_{t,i}(-2\eta r_{t+1,i}v_{t,i})=v_{t,i}(\theta_{t+1,i}-\theta_i) 
\end{align*}
from which we get
\begin{align*}
S_1
&= \E\Bigg[\sum_{t=1}^{T}\frac{1-\beta^t}{1-\beta}v_{t}^\top (\theta_{t+1}-\theta_t)\Bigg]\\
&= \E\Bigg[\sum_{i=1}^{n}\sum_{t=1}^{T} \frac{1-\beta^t}{1-\beta} (r_{t+1,i}-r_{t,i})\Bigg]\\ 
&\leq \E\Bigg[\sum_{i=1}^{n}\sum_{t=1}^{T} r_{t+1,i}-r_{t,i}\Bigg]\quad\text{(Since $r_{t+1,i}\leq r_{t,i}$)}\\
&= \sum_{i=1}^{n}\E[r_{T+1,i}]-n\E[\tilde F_1].
\end{align*}

For $S_2$, we have
\begin{align*}
S_2
&= \E\Bigg[\sum_{t=1}^{T}(\frac{g_t}{2\tilde F_t}-\frac{1-\beta^t}{1-\beta}v_t)^\top (\theta_{t+1}-\theta_t)\Bigg]\\
&= \E\Bigg[\sum_{i=1}^{n}\sum_{t=1}^{T}(-\frac{1}{2\tilde F_t}\frac{\beta}{1-\beta}m_{t-1,i})^\top(-2\eta r_{t+1,i}v_{t,i})\Bigg]\\
&\leq \frac{\beta\eta}{(1-\beta)\sqrt{a}}\E\Bigg[\left\lvert\sum_{i=1}^{n}\sum_{t=1}^{T}r_{t+1,i}m_{t-1,i}v_{t,i}\right\rvert\Bigg]\\
&\leq \frac{\beta\eta}{(1-\beta)\sqrt{a}}\E\Bigg[\bigg(\sum_{i=1}^{n}\sum_{t=1}^{T}r_{t+1,i}m^2_{t-1,i}\bigg)^{1/2}\bigg(\sum_{i=1}^{n}\sum_{t=1}^{T}r_{t+1,i}v^2_{t,i}\bigg)^{1/2}\Bigg]\\
&\leq \frac{\beta\sqrt{B}nF(\theta_1)}{(1-\beta)\sqrt{a}},
\end{align*}
where the fourth inequality is by the  Cauchy-Schwarz inequality, the last inequality is by Lemma \ref{lempre} (i) (ii).

For $S_3$, by the Cauchy-Schwarz inequality, 
we have
\begin{align*}
S_3
&= \E\Bigg[\sum_{t=1}^{T}(\nabla F(\theta_t)-\frac{g_t}{2\tilde F_t})^\top (\theta_{t+1}-\theta_t)\Bigg]\\
&\leq \E\Bigg[\sum_{t=1}^{T}\|\nabla F(\theta_t)-\frac{g_t}{2\tilde F_t})\| \|\theta_{t+1}-\theta_t)\|\Bigg]\\
&\leq \E\Bigg[\bigg(\sum_{t=1}^{T}\|\nabla F(\theta_t)-\frac{g_t}{2\tilde F_t})\|^2\bigg)^{1/2}\bigg(\sum_{t=1}^{T}\|\theta_{t+1}-\theta_t\|^2\bigg)^{1/2}\Bigg]\\
&\leq \Bigg(\E\Bigg[\sum_{t=1}^{T}\|\nabla F(\theta_t)-\frac{g_t}{2\tilde F_t})\|^2\Bigg]\Bigg)^{1/2}\Bigg(\E\Bigg[\sum_{t=1}^{T}\|\theta_{t+1}-\theta_t\|^2\Bigg]\Bigg)^{1/2}\\
&\leq F(\theta_1)\sqrt{\eta n T}\sqrt{\frac{G^2_\infty}{8a^3}\sigma^2_f+\frac{1}{2a}\sigma^2_g},
\end{align*}
where the last inequality is by (vi) in Lemma \ref{lempre} and (\ref{srev1+}) in Theorem \ref{thm1}.

For $S_4$, also by (\ref{srev1+}) in Theorem \ref{thm1}, we have
\begin{align*}
S_4
= \frac{L_F}{2}\E\Bigg[\sum_{t=1}^{T}\|\theta_{t+1}-\theta_t\|^2\Bigg]
\leq \frac{L_F\eta n F^2(\theta_1)}{2} .
\end{align*}

With the above bounds on $S_1, S_2, S_3, S_4$, (\ref{ET1})  can be rearranged as
\begin{align*}
&\quad F(\theta^*) - \frac{\beta\sqrt{B}nF(\theta_1)}{(1-\beta)\sqrt{a}} -F(\theta_1)\sqrt{\eta n T}\sqrt{\frac{G^2_\infty}{4a^3}\sigma^2_f+\frac{1}{a}\sigma^2_g}- \frac{L_F\eta n F^2(\theta_1)}{2}\\
&\leq  \sum_{i=1}^{n}\E[r_{T+1,i}] - n \E[\tilde F_1] +  F(\theta_1) \\
&\leq  \Big(\min_i \E[r_{T+1,i}]+(n-1)\E[\tilde F_1]\Big)- (n-1)\E[\tilde F_1] + \Big(F(\theta_1)-\E[\tilde F_1]\Big)\\
&\leq \min_i\E[r_{T+1,i}]+ \E[\lvert F(\theta_1)-\tilde F_1\rvert] \\
& \leq \min_i\E[r_{T+1,i}]+ \frac{1}{2\sqrt{a}}\sigma_f,
\end{align*}
where (iii) in Lemma \ref{lempre} was used. Hence,  
\begin{equation*}
\min_i\E[r_{T,i}]\geq \max\{F(\theta^*)-\eta D_1-\beta D_2-\sigma D_3,0\},   
\end{equation*}
where $\sigma=\max\{\sigma_f,\sigma_g\}$ and
\begin{align*}
&D_1 = \frac{L_F n F^2(\theta_1)}{2}, \quad D_2 =\frac{\sqrt{B}nF(\theta_1)}{(1-\beta)\sqrt{a}},\\
&D_3 = \frac{1}{2\sqrt{a}} + F(\theta_1)\sqrt{\eta n T}\sqrt{\frac{G^2_\infty}{4a^3}+\frac{1}{a}}.
\end{align*}
In the case $\sigma=0$, we obtain the stated estimate in Theorem \ref{thm4}.

\section{Proof of Theorem \ref{thm2}}\label{pf2}
The upper bound on $\sigma_g$ is given by (iv) in Lemma \ref{lempre}. Since $f$ is $L$-smooth, we have
\begin{align}\label{fL}
f(\theta_{t+1})\leq f(\theta_t)+\nabla f(\theta_t)^\top(\theta_{t+1}-\theta_t)+\frac{L}{2}\|\theta_{t+1}-\theta_t\|^2.
\end{align}
Denoting $\eta_t=\eta/\tilde F_t$, the second term in the RHS of (\ref{fL}) can be expressed as
\begin{align}\notag
&\quad\nabla f(\theta_t)^\top(\theta_{t+1}-\theta_t)\\\notag
&= \nabla f(\theta_t)^\top(-2\eta r_{t+1}v_{t})\\\notag
&= -\frac{1}{1-\beta^t}\nabla f(\theta_t)^\top \eta_t r_{t+1}m_{t}\quad\text{(since$\;m_t=2(1-\beta^t)\tilde F_t v_t$)}\\\label{T2}
&= -\frac{1}{1-\beta^t}\nabla f(\theta_t)^\top \eta_t r_{t+1}(\beta m_{t-1}+(1-\beta)g_t)\\\notag
&= -\frac{1-\beta}{1-\beta^t}\nabla f(\theta_t)^\top \eta_t r_{t+1}g_{t} - \frac{\beta}{1-\beta^t}\nabla f(\theta_t)^\top \eta_t r_{t+1}m_{t-1}\\\notag
&= -\frac{1-\beta}{1-\beta^t}\nabla f(\theta_t)^\top \eta_{t-1} r_{t}g_{t} + \frac{1-\beta}{1-\beta^t}\nabla f(\theta_t)^\top (\eta_{t-1} r_{t}-\eta_t r_{t+1}) g_{t}\\\notag
&\quad-\frac{\beta}{1-\beta^t}\nabla f(\theta_t)^\top \eta_t r_{t+1}m_{t-1}.
\end{align}    
We further bound the second term and third term in the RHS of (\ref{T2}), respectively. For the second term, we note that $\lvert\frac{1-\beta}{1-\beta^t}\rvert\leq 1$ and
\begin{align}\notag
&\quad\lvert\nabla f(\theta_t)^\top (\eta_{t-1}r_{t}-\eta_tr_{t+1}) g_{t}\rvert\\\notag
&= \lvert\nabla f(\theta_t)^\top \eta_{t-1}(r_{t}-r_{t+1}) g_{t}+ \nabla f(\theta_t)^\top (\eta_{t-1}-\eta_t)r_{t+1}g_{t}\rvert\\\notag
&= \lvert\nabla f(\theta_t)^\top \eta_{t-1}(r_{t}-r_{t+1}) g_{t}+ (\eta_{t-1}-\eta_t)g_t^\top r_{t+1}g_{t}\\\notag
&\quad+ (\eta_{t-1}-\eta_t)(\nabla f(\theta_t)-g_t)^\top r_{t+1}g_{t}\rvert\\\notag
&\leq \|\nabla f(\theta_t)\|_\infty \lvert\eta_{t-1}\rvert \|r_{t}-r_{t+1}\|_{1,1} \|g_{t}\|_\infty + \lvert\eta_{t-1}-\eta_t\rvert g_t^\top r_{t+1}g_{t}\\\notag
&\quad+ \lvert\eta_{t-1}-\eta_t\rvert\lvert(\nabla f(\theta_t)-g_t)^\top r_{t+1}g_{t}\rvert\\\notag
&\leq (\eta G^2_\infty/\sqrt{a})(\|r_t\|_{1,1}-\|r_{t+1}\|_{1,1})+(2\eta/\sqrt{a})g_t^\top r_{t+1}g_{t}\\\label{T21}
&\quad+ (2\eta/\sqrt{a})\lvert(\nabla f(\theta_t)-g_t)^\top r_{t+1}g_{t}\rvert.
\end{align}
The third inequality holds because for a positive diagonal matrix $A$, $x^\top Ay\leq\|x\|_\infty\|A\|_{1,1}\|y\|_\infty$, where $\|A\|_{1,1}=\sum_{i}a_{ii}$. The last inequality follows from the result $r_{t+1,i}\leq r_{t,i}$ for $i\in[n]$, the assumption $\|g_t\|_\infty\leq G_\infty$, $\tilde F_t\geq \sqrt{a}$, and (i) in Lemma (\ref{lempre}).

For the third term in the RHS of (\ref{T2}), we note that
$$
-\frac{\beta}{1-\beta^t} \nabla f(\theta_t)^\top \eta_t r_{t+1}m_{t-1} \leq \frac{\beta\eta}{(1-\beta)\sqrt{a}}\lvert\nabla f(\theta_t)^\top \eta_t r_{t+1}m_{t-1}\rvert,
$$
in which
\begin{align}\notag
&\quad \lvert\nabla f(\theta_t)^\top r_{t+1}m_{t-1}\rvert\\\notag
&= \lvert g_{t}^\top r_{t+1}m_{t-1}+(\nabla f(\theta_t)-g_t)^\top r_{t+1}m_{t-1}\rvert\\\label{T22}
&\leq \frac{1}{2}g_t^\top r_{t+1}g_t+\frac{1}{2}m_{t-1}^\top r_{t+1}m_{t-1}+ \lvert(\nabla f(\theta_t)-g_t)^\top r_{t+1}m_{t-1}\rvert,
\end{align}
where the last inequality is because for a positive diagonal matrix $A$, $x^\top Ay\leq \frac{1}{2}x^\top Ax+\frac{1}{2}y^\top Ay$.
Substituting (\ref{T21}) and (\ref{T22}) into (\ref{T2}), we get
\begin{equation}\label{T2s}
\begin{aligned}
\nabla f(\theta_t)^\top&(\theta_{t+1}-\theta_t)
\leq -\frac{1-\beta}{1-\beta^t}\nabla f(\theta_t)^\top \eta_{t-1}r_{t} g_{t} + \frac{\eta G^2_\infty}{\sqrt{a}}(\|r_{t}\|_{1,1}-\|r_{t+1}\|_{1,1})\\
&+\bigg(\frac{2\eta}{\sqrt{a}}+\frac{\beta\eta}{2(1-\beta)\sqrt{a}}\bigg)g_t^\top r_{t+1}g_t + \frac{\beta\eta}{2(1-\beta)\sqrt{a}}m_{t-1}^\top r_{t+1}m_{t-1}\\
&+\frac{2\eta}{\sqrt{a}}\lvert(\nabla f(\theta_t)-g_t)^\top r_{t+1}g_{t}\rvert+\frac{\beta\eta}{(1-\beta)\sqrt{a}}\lvert(\nabla f(\theta_t)-g_t)^\top r_{t+1}m_{t-1}\rvert.
\end{aligned}
\end{equation}
With (\ref{T2s}), we take an conditional expectation on (\ref{fL}) with respect to $(\theta)$ and rearrange to get
\begin{equation}\label{et}
\begin{aligned}
&\quad \frac{1-\beta}{1-\beta^t}\nabla f(\theta_t)^\top \eta_{t-1}r_{t}\nabla f(\theta_t)
=\E_{\xi_t}\bigg[\frac{1-\beta}{1-\beta^t}\nabla f(\theta_t)^\top \eta_{t-1}r_{t}g_t\bigg]\\
&\leq \E_{\xi_t}\Bigg[f(\theta_t)-f(\theta_{t+1})+ \frac{\eta G^2_\infty}{\sqrt{a}}(\|r_{t}\|_{1,1}-\|r_{t+1}\|_{1,1})\\
&\quad\quad+\bigg(\frac{2\eta}{\sqrt{a}}+\frac{\beta\eta}{2(1-\beta)\sqrt{a}}\bigg)g_t^\top r_{t+1}g_t + \frac{\beta\eta}{2(1-\beta)\sqrt{a}}m_{t-1}^\top r_{t+1}m_{t-1}\\
&\quad\quad+\frac{2\eta}{\sqrt{a}}\lvert(\nabla f(\theta_t)-g_t)^\top r_{t+1}g_{t}\rvert\\
&\quad\quad+\frac{\beta\eta}{(1-\beta)\sqrt{a}}\lvert(\nabla f(\theta_t)-g_t)^\top r_{t+1}m_{t-1}\rvert+\frac{L}{2}\|\theta_{t+1}-\theta_t\|^2\Bigg],
\end{aligned}
\end{equation}
where the assumption $\E_{\xi_t}[g_t]=\nabla f(\theta_t)$ is used in the first equality. Since $\xi_1,...,\xi_t$ are independent random variables, we set $\E=\E_{\xi_1}\E_{\xi_2}...\E_{\xi_T}$ and take a summation on (\ref{et}) over $t$ from 1 to $T$ to get
\begin{equation}\label{Esf}
\begin{aligned}
&\quad\E\Bigg[\sum_{t=1}^{T}\frac{1-\beta}{1-\beta^t}\nabla f(\theta_t)^\top \eta_{t-1}r_{t}\nabla f(\theta_t)\Bigg]\\
&\leq \E\Big[f(\theta_1)-f(\theta_{T+1})\Big] + \frac{\eta G^2_\infty}{\sqrt{a}}\E\Big[\|r_{1}\|_{1,1}-\|r_{T+1}\|_{1,1}\Big]\\
&\quad+ \bigg(\frac{2\eta}{\sqrt{a}}+\frac{\beta\eta}{2(1-\beta)\sqrt{a}}\bigg)\E\Bigg[\sum_{t=1}^{T}g_t^\top r_{t+1}g_t\Bigg] + \frac{\beta\eta}{2(1-\beta)\sqrt{a}}\E\Bigg[\sum_{t=1}^{T}m_{t-1}^\top r_{t}m_{t-1}\Bigg]\\
&\quad+\frac{2\eta}{\sqrt{a}}\E\Bigg[\sum_{t=1}^{T}\lvert(\nabla f(\theta_t)-g_t)^\top r_{t+1}g_{t}\rvert\Bigg]\\
&\quad+\frac{\beta\eta}{(1-\beta)\sqrt{a}}\E\Bigg[\sum_{t=1}^{T}\lvert(\nabla f(\theta_t)-g_t)^\top r_{t+1}m_{t-1}\rvert\Bigg]+\frac{L}{2}\E\Bigg[\sum_{t=1}^{T}\|\theta_{t+1}-\theta_{t}\|^2\Bigg].
\end{aligned}
\end{equation}
Below we bound each term in (\ref{Esf}) separately. By the Cauchy-Schwarz inequality, we get
\begin{align}\notag
&\quad \E\Bigg[\sum_{t=1}^{T}\lvert(\nabla f(\theta_t)-g_t)^\top r_{t+1}m_{t-1}\rvert\Bigg]\\\notag
&\leq \E\Bigg[\sum_{t=1}^{T}\|\nabla f(\theta_t)-g_t\| \|r_{t+1}m_{t-1}\|\Bigg]\\\notag
&\leq \E\Bigg[\bigg(\sum_{t=1}^{T}\|\nabla f(\theta_t)-g_t\|^2\bigg)^{1/2}\bigg(\sum_{t=1}^{T}\|r_{t+1}m_{t-1}\|^2\bigg)^{1/2}\Bigg]\\\notag
&\leq \Bigg(\E\Bigg[\sum_{t=1}^{T}\|\nabla f(\theta_t)-g_t\|^2\Bigg]\Bigg)^{1/2}\Bigg(\E\Bigg[\sum_{t=1}^{T}\| r_{t+1}m_{t-1}\|^2\Bigg]\Bigg)^{1/2}\\\label{vm}
&\leq \sqrt{2BnT/\eta}F(\theta_1)\sigma_g,
\end{align}
where Lemma \ref{lempre} (ii) and the bounded variance assumption were used. We replace $m_{t-1}$ in (\ref{vm}) by $g_t$ and use Lemma \ref{lempre} (v) to get
\begin{align}\notag 
&\quad\E\Bigg[\sum_{t=1}^{T}\lvert(\nabla f(\theta_t)-g_t)^\top r_{t+1}g_{t}\rvert\Bigg] \\\notag
&\leq \Bigg(\E\Bigg[\sum_{t=1}^{T}\|\nabla f(\theta_t)-g_t\|^2\Bigg]\Bigg)^{1/2}\Bigg(\E\Bigg[\sum_{t=1}^{T}\| r_{t+1}g_{t}\|^2\Bigg]\Bigg)^{1/2}\\ \label{vg}
& \leq \frac{2\sqrt{2BnT/\eta}F(\theta_1)\sigma_g}{1-\beta}.
\end{align}
By (\ref{srev1+}), the last term in (\ref{Esf}) is bounded above by 
\begin{equation}\label{dtheta^2}
\frac{L}{2}\E\left[\sum_{t=0}^\infty\|\theta_{t+1}-\theta_t\|^2 \right] \leq  \frac{L\eta n}{2}F^2(\theta_1).
\end{equation}
Substituting Lemma \ref{lempre} (i) (iii), (\ref{dtheta^2}), (\ref{vg}),  (\ref{vm}) into (\ref{Esf}) to get
\begin{equation}\label{rgb}
\begin{aligned}
\E\Bigg[\sum_{t=1}^{T}\frac{1-\beta}{1-\beta^t}&\nabla f(\theta_t)^\top\eta_{t-1} r_{t}\nabla f(\theta_t)\Bigg]
\leq (f(\theta_1)-f^*)+\frac{\eta G^2_\infty}{\sqrt{a}}nF(\theta_1)\\
&+\bigg(\frac{2}{\sqrt{a}}+\frac{\beta}{2(1-\beta)\sqrt{a}}\bigg)\frac{8BnF(\theta_1)}{(1-\beta)^2}+\frac{\beta BnF(\theta_1)}{(1-\beta)\sqrt{a}}\\
&+\frac{(4+\beta)\sqrt{2B\eta}}{(1-\beta)\sqrt{a}}F(\theta_1)\sqrt{nT}\sigma_g+\frac{L\eta n}{2}F^2(\theta_1).
\end{aligned}    
\end{equation}
Note that the left hand side is bounded from below by 
$$
(1-\beta)\frac{\eta}{\sqrt{B}}  \E\Bigg[\min_ir_{T,i}\sum_{t=1}^{T}\|\nabla f(\theta_t)\|^2\Bigg],
$$ 
where we used  $\lvert\frac{1-\beta}{1-\beta^t}\rvert\geq 1-\beta$ and $\eta_t\geq \eta/\sqrt{B}$. Thus we have 

\begin{align}\label{rf}
\E\Bigg[\min_ir_{T,i}\sum_{t=1}^{T}\|\nabla f(\theta_t)\|^2\Bigg]
\leq \frac{C_1+C_2n+C_3\sigma_g \sqrt{ nT}}{\eta },
\end{align}
where
\begin{align*}
C_1 &= \frac{(f(\theta_1)-f^*)\sqrt{B}}{1-\beta},\\
C_2 &=  \frac{\sqrt{B}\eta G^2_\infty F(\theta_1)}{(1-\beta)\sqrt{a}}
+\bigg(\frac{2}{\sqrt{a}}+\frac{\beta}{2(1-\beta)\sqrt{a}}\bigg)\frac{8B^{3/2}F(\theta_1)}{(1-\beta)^3}\\
&\quad +\frac{\beta B^{3/2}F(\theta_1)}{(1-\beta)^2\sqrt{a}}+\frac{\sqrt{B}L\eta }{2(1-\beta)^2}F^2(\theta_1),\\
C_3 &= \frac{(4+\beta)B\sqrt{2\eta}}{(1-\beta)\sqrt{a}}F(\theta_1).
\end{align*}
By the H\"{o}lder inequality, we have for any $\alpha \in (0, 1)$,
$$
\E[X^\alpha]\leq \E[XY]^\alpha \E[Y^{-\alpha/(1-\alpha)}]^{1-\alpha}.
$$
Take $X=\Delta:=\sum_{t=1}^{T}\|\nabla f(\theta_t)\|^2, Y=\min_ir_{T,i}$, we obtain 
$$
\E[\Delta^\alpha]\leq \E[\min_ir_{T,i}\Delta ]^\alpha \E[(\min_ir_{T,i})^{-\alpha/(1-\alpha)}]^{1-\alpha}.
$$
Using (\ref{rf}), and lower bounding $\E[\Delta^\alpha]$ by $T^{\alpha}\E[\min_{1\leq t\leq T}\|\nabla f(\theta_t)\|^{2\alpha}]$, we obtain 
$$
\E\Bigg[\min_{1\leq t \leq T}\|\nabla f(\theta_t)\|^{2\alpha} \Bigg]
\leq \left( \frac{C_1+C_2n+C_3\sigma_g \sqrt{ nT}}{\eta T } \right)^\alpha 
\E[(\min_ir_{T,i})^{-\alpha/(1-\alpha)}]^{1-\alpha}.
$$
This by taking $\alpha=1-\epsilon$ yields the stated bound.

\section{Proof of Theorem \ref{thm3}}
Using the same argument as for (iv) in Lemma \ref{lemrv}, we have
$$
\sum_{i=1}^{n}\sum_{t=1}^{T}r_{t+1,i}g^2_{t,i}\leq\frac{8Bn\sqrt{f_1(\theta_1)+c}}{(1-\beta)^2\eta}.
$$
With this estimate and the convexity of $f_t$, the regret can be bounded by
\begin{align*}\label{rt}
R(T) &= \sum_{t=1}^{T}f_t(\theta_t)-f_t(\theta^*)
\leq \sum_{t=1}^{T} g_t^\top (\theta_t-\theta^*)\\
&\leq \sum_{i=1}^{n}\sum_{t=1}^{T}\lvert g_{t,i}\rvert\sqrt{r_{t+1,i}} \frac{\lvert\theta_{t,i}-\theta^*_i\rvert}{\sqrt{r_{t+1,i}}}\\
&\leq \left(\sum_{i=1}^{n}\sum_{t=1}^{T}
r_{t+1, i} g_{t, i}^2 \right)^{1/2}\left(\sum_{i=1}^{n}\sum_{t=1}^{T} \frac{\lvert\theta_{t,i}-\theta^*_i\rvert^2}{r_{t+1,i}}\right)^{1/2}\\
&\leq \frac{2D_\infty\sqrt{2B}}{1-\beta}(f_1(\theta_1)+c)^{1/4}\sqrt{nT/\eta}\left(\sum_{i=1}^{n} \frac{1}{r_{T+1,i}}\right)^{1/2},
\end{align*}
where the fourth inequality is by the Cauchy-Schwarz inequality, and the assumption $\|x-y\|_\infty\leq D_\infty$ for all $x,y\in\mathcal{F}$ is used in the last inequality.

\medskip

\bibliographystyle{amsplain}
\bibliography{ref}

\end{document}